\theoremstyle{definition}
\newtheorem{definition}{Definition}[section]
\theoremstyle{plain}
\newtheorem{theorem}{Theorem}
\newtheorem{lemma}{Lemma}
\newtheorem{corollary}{Corollary}
\theoremstyle{remark}
\newtheorem{remark}{Remark}
\newtheorem{example}{Example}
\renewcommand\subsection{\@startsection{subsection}{2}\z@{-.5\linespacing\@plus-.7\linespacing}{.5\linespacing}{\normalfont\scshape}}
\renewcommand\subsubsection{\@startsection{subsubsection}{3}\z@{.5\linespacing\@plus.7\linespacing}{-.5em}{\normalfont\scshape}}
\DeclareMathAlphabet{\mathsfit}{\encodingdefault}{\sfdefault}{m}{sl}
\SetMathAlphabet{\mathsfit}{bold}{\encodingdefault}{\sfdefault}{bx}{n}
\newcommand{\E}{\mathbb{E}}
\newcommand{\R}{\mathbb{R}}
\newcommand{\N}{\mathbb{N}}
\newcommand{\Var}{\mathrm{Var}}
\DeclareMathOperator*{\argmin}{arg\,min}
\DeclareMathOperator*{\im}{im}
\DeclareMathOperator{\dom}{dom}
\newcommand{\norm}[1]{\left\lVert#1\right\rVert}
\newcommand{\magn}[1]{\left\lvert#1\right\rvert}
\title{Thermodynamic structure of the Sinkhorn flow}
\author{Anand Srinivasan}
\address{Department of Applied Mathematics and Theoretical Physics, University of Cambridge}
\email{as3273@cam.ac.uk}
\author{Jean-Jacques Slotine}
\address{Nonlinear Systems Laboratory, Massachusetts Institute of Technology}
\email{jjs@mit.edu}
\date{January 2026}
\begin{document}

\begin{abstract}
Entropy-regularized optimal transport, which has strong links to the Schr\"odinger bridge problem in statistical mechanics, enjoys a variety of applications from trajectory inference to generative modeling.
A major driver of renewed interest in this problem is the recent development of fast matrix-scaling algorithms\textemdash known as iterative proportional fitting or the Sinkhorn algorithm\textemdash for entropic optimal transport, which have favorable complexity over traditional approaches to the unregularized problem.
Here, we take a perspective on this algorithm rooted in the thermodynamic origins of Schr\"odinger's problem and inspired by the modern geometric theory of diffusion: is the Sinkhorn flow (viewed in continuous-time as a mirror descent by recent results) the gradient flow of entropy in a formal Riemannian geometry?
We answer this question affirmatively, finding a nonlocal Wasserstein gradient structure in the dynamics of its free marginal.
This offers a physical interpretation of the Sinkhorn flow as the stochastic dynamics of a particle with law evolving by the nonlocal diffusion of a chemical potential.
Simultaneously, it brings a standard suite of functional inequalities characterizing Markov diffusion processes to bear upon its geometry and convergence.
We prove an entropy-energy (de Bruijn) identity, a Poincar\'e inequality, and a Bakry-\'Emery-type condition under which a logarithmic Sobolev inequality (LSI) holds and implies exponential convergence of the Sinkhorn flow in entropy.
We lastly discuss computational applications such as stopping heuristics and latent-space design criteria leveraging the LSI and, returning to the physical interpretation, the possibility of natural systems whose relaxation to equilibrium inherently solves entropic optimal transport or Schr\"odinger bridge problems.

 \end{abstract}

\maketitle

\section{Introduction}

A central question at the intersection of statistical mechanics and inference is the following large-deviations problem: given empirical observations of a stochastic process $X_\tau$ at some times $\tau=0$ and $\tau=1$, what is the most likely path taken between $X_0$ and $X_1$ supposing that we have a \textit{prior} on the law of $X_\tau$?
An early instance of such a problem is Schr\"odinger's thought experiment \cite{leonard_survey_2013, chen_stochastic_2021}: suppose that a system of $N$ gas particles undergoing diffusion at thermodynamic equilibrium present a particular \textit{empirical distribution}
\begin{align}
    \label{eq:empirical}
    \mu_\tau^N &= \frac1N \sum_{i=1}^N \delta_{X_\tau^i},
\end{align}
where $\delta_x$ is the Dirac measure centered at $x$. 
What is the probability of observing \eqref{eq:empirical}, given the prior that $X_\tau^i$ are i.i.d. Brownian motions?

In modern terms (see \cite{follmer_random_1988} \S 4.1, \cite{leonard_survey_2013} \S 6, and \cite{dembo_large_2010} Ch. 6-7), this question is formalized and answered in terms of a \textit{large deviation principle} (LDP): informally, 
\begin{align}
    \label{eq:ldp}
    \lim_{N\to\infty}-\frac{1}{N}\log\Pr[\mu^N \in A] =\inf_{P \in A}H(P|R)
\end{align}
where the empirical $\mu^N = (\mu_\tau^N)_{\tau=0}^1$, the candidate $P$, and the prior $R$ are regarded as positive measures (\textit{path measures}) in ${\rm M}_+(\Omega)$ on the space of continuous paths $\Omega = C([0, 1], \R^d)$.
(In \eqref{eq:ldp}, $\mu^N$ and $P$ are probability measures in ${\rm P}(\Omega) \subset {\rm M}_+(\Omega)$ on the path space, whereas $R$ may be an unbounded measure, such as the stationary Wiener measure on $\R^d$.)
The \textit{rate function} of the LDP \eqref{eq:ldp} is the relative entropy 
\begin{align}
    \label{eq:H}
    H(P|R) &= \begin{cases}
    \E_P[\log\frac{dP}{dR}] & {\rm if}\ P \ll R\\
    +\infty & {\rm otherwise}.
    \end{cases}
\end{align} 
Schr\"odinger's original problem is one of inference: suppose one observes snapshots
\begin{align}
    \label{eq:snapshots}
    \mu_0^N \in B_\Delta(\mu),\quad \mu_1^N \in B_\Delta(\nu)
\end{align}
($\Delta$-close in a metrization of the narrow topology, recalling that narrow convergence here means $\int_{\R^d} fd\mu_0^N \to \int_{\R^d}fd\mu$ for all bounded continuous $f$).
The law of large numbers (LLN) states that the left marginal $\mu$ is the limit (in the same topology) of the initial condition $\mu_0^N$.
In the case that $R$ is a Markov process with transition kernel $p(\tau, x, \tau', x')$, this further specifies the terminal condition, giving the extent to which right marginal $\nu$ deviates from the LLN (using $\mu, \nu$ as densities):
\begin{align}
    \label{eq:lln_deviation}
    \int_{\R^d} p(0, x, 1, x')\mu(x)dx &\ne \nu(x').
\end{align}
What is the most likely path (a \textit{conditional} LLN) if one were to constrain $\mu_0^N \to \mu_0^* \in B_\Delta(\mu)$ and $\mu_1^N \to \mu_1^* \in B_\Delta(\nu)$ as $\Delta \downarrow 0$?
Stated as in \S 6 \cite{leonard_survey_2013}, this is 
\begin{align}
    \label{eq:schr_question}
    \lim_{\Delta \downarrow 0}\lim_{N\to\infty} \Pr[\mu^N \in A | \mu_1^N \in B_\Delta(\nu)] \in {\rm P}(\Omega),
\end{align}
which is a probability measure defined on measurable subsets $A$ of the path space $\Omega$.
This is answered by the LDP \eqref{eq:ldp} as the path minimizing the rate function subject to the constraints (see 6.4, \cite{leonard_survey_2013}):
\begin{align}
    \label{eq:dynamic_sb}
    \inf_{P_0=\mu, P_1=\nu} H(P|R),
\end{align}
which is also called the \textit{dynamic Schr\"odinger bridge problem}.

This problem of path inference given snapshots (also called \textit{trajectory inference} \cite{lavenant_toward_2024}) can be posed for a wide variety of stochastic processes outside of equilibrium thermodynamics.
This includes inference of the laws of Fokker-Planck tracers in a fluid flow, those of genetic traits in a species given a stochastic evolutionary model \cite{pavon_span_2021}, and those of high-dimensional generative models which seek to bridge noise and data distributions using optimal stochastic interpolants \cite{de_bortoli_diffusion_2021, albergo_stochastic_2023}.
The second \cite{schiebinger_optimal-transport_2019,lavenant_toward_2024}, in particular, has seen a recent surge in interest due to the development of single-cell RNA sequencing (scRNA-seq), through which trajectory inference can reveal high-dimensional temporal processes associated to cell differentiation.
In fact, no apparent restriction on the law of the prior $R$ has been made: only (i) that the LDP \eqref{eq:ldp} holds (rigorously, by Sanov's theorem \S 6.2, \cite{dembo_large_2010})
and (ii) that $H$ \eqref{eq:H} is well-defined for possibly unbounded path measures $R$ as noted earlier (e.g. the Brownian prior, which has the Lebesgue measure as instantaneous marginals).
For the latter, a sufficient condition is that $R$ is $\sigma$-finite \cite{leonard_properties_2013}. 

Under these assumptions, $H$ satisfies the following chain rule (Theorem 2.4, \cite{leonard_properties_2013}, where it is called the \textit{additive property}):
\begin{align}
    \label{eq:H_chain}
    H(P|R) &= H(P_{01}|R_{01}) + \int_{\R^d\times \R^d} H(P^{xy}|R^{xy})P_{01}(dx,dy)
\end{align}
with $P_{01}, R_{01}$ the joint measures (which we call \textit{couplings} as in the optimal transport literature) of the $t=0, 1$ marginals and $P^{xy}$ denotes the law of the \textit{bridge} $P^{xy} = P(\cdot|X_0=x, X_1=y)$.
In this way, a key reduction of Schr\"odinger's problem is obtained: the cost \eqref{eq:H_chain} decomposes into a \textit{static} cost of the coupling $P_{01}$ plus a \textit{dynamic} cost of the bridge $P^{xy}$.
The dynamic cost can always be made to vanish by setting $P^{xy}$ to the law of the prior, in this case the Brownian bridge $R^{xy}$.
Thus, with $R_{01}$ the heat kernel of variance $\varepsilon$ on $\R^d$,
\begin{align}
    R_{01}(x, y) &= p(0, x, 1, y),\quad p(s, x, t, y) = \frac{1}{(2\pi\varepsilon (t-s))^{d/2}}\exp\left(-\frac{\magn{x-y}^2}{2\varepsilon(t-s)}\right),
\end{align}
we have that \eqref{eq:dynamic_sb} is further equivalent to the \textit{static Schr\"odinger bridge problem}
\begin{align}
    \label{eq:static_sb}
    \inf_{\pi \in \Pi(\mu, \nu)}H(\pi|R_{01}) &\equiv \inf_{\pi \in \Pi(\mu, \nu)} \frac{1}{\varepsilon}\E_{\pi}\left[\frac12\magn{x-y}^2\right] + H(\pi),
\end{align}
where $\Pi(\mu, \nu)$ denotes the set of couplings between marginals $\mu, \nu$. 
Since the marginals are fixed as constraints and $H(\pi) = H(\pi|\mu\otimes \nu) + H(\mu) + H(\nu)$, the static problem \eqref{eq:static_sb} is finally equivalent to
\begin{align}
    \label{eq:eot_intro}
    \inf_{\pi \in \Pi(\mu, \nu)} \E_{\pi}[c] + \varepsilon H(\pi|\mu\otimes \nu),
\end{align}
the \textit{entropy-regularized optimal transport (OT$_\varepsilon$) problem} \cite{peyre_computational_2020}.
Schr\"odinger's problem corresponds to Euclidean cost $c(x, y) = \frac12\magn{x-y}^2$; however,  the dynamic problem \eqref{eq:dynamic_sb} corresponding to any Markov prior $R$ with transition kernel $p(0, x, 1, y) \propto e^{-c/\varepsilon}$ reduces in the same way to entropic OT \eqref{eq:eot_intro}.

We refer to \cite{nutz_introduction_nodate} for a survey of entropic OT independent of the origin in statistical mechanics.
(Keeping to the origin in thermodynamics, however, we keep the focus on the continuous rather than discrete setting.)
In its own right, entropic regularization of $\E_{\pi}[c]$ \eqref{eq:static_sb} is in an analytical improvement: $H$ is lower-semicontinuous with respect to narrow convergence and strictly convex on bounded-$H$ sets (\S 1 \cite{nutz_introduction_nodate}).
The set of couplings $\Pi(\mu, \nu)$ is narrowly compact (see Theorem 4.1, \cite{villani_optimal_2009}), convex, and closed in variation when $\Omega$ is a Polish space (e.g. the path space with supremum norm).
If it contains a bounded-$H$ set, existence and uniqueness of solutions to \eqref{eq:static_sb} is guaranteed (Theorem 1.10, \cite{nutz_introduction_nodate}).
Moreover, under standard existence and uniqueness criteria for the unregularized (Monge) problem (Theorem 10.28, \cite{villani_optimal_2009}), the optimal unregularized coupling concentrates on a $d$-dimensional subset, the graph $(x, T(x))$ of an optimal transport map $T$, and is hence singular with respect to the Lebesgue measure on $X\times Y$.
This especially complicates numerical analysis.
On the other hand, with the entropic regularization equivalent to the static Schr\"odinger bridge problem \eqref{eq:static_sb}, standard optimality criteria (Theorem 2.1, \cite{nutz_introduction_nodate}) shows that the optimal entropic coupling $\pi_*^\varepsilon$ satisfies 
\begin{align}
    \label{eq:eot_optimal}
    \frac{d\pi_*^\varepsilon}{dR_{01}}(x, y) &= e^{f(x)+g(y)}\quad R_{01}\text{-a.e.}
\end{align}
for some \textit{Schr\"odinger potentials} $f \in L^1(\mu), g \in L^1(\nu)$, and is hence strictly positive on the support of the reference measure $R_{01}$.

The prospect of identifying solutions, either as a static Schr\"odinger bridge in the continuous case or as a coupling matrix in the discrete case, especially benefits from entropic regularization: the celebrated \textit{Sinkhorn algorithm} (see \cite{cuturi_sinkhorn_2013} and \S6, \cite{nutz_introduction_nodate}) can be derived by iteratively solving for Schr\"odinger potentials $f$ and $g$ in the optimality condition \eqref{eq:eot_optimal}, which corresponds to fixing the left and right marginals at each half-step in \eqref{eq:sinkhorn}.
In the discrete case, this is a simple matrix scaling algorithm which incurs $O(n^2)$ per iteration for $n$ support points (with $n$ scaling e.g. as $m^d$ for $m$ coordinate-wise bins of some underlying $d$-dimensional domain) versus $O(n^3\log n)$ using traditional linear programming methods for the unregularized problem.
(This is the best case possible, with $n^2$ updates needed under any approach for the entries of a suboptimal coupling; what remains to improve is the number of iterations, or convergence rate.)
As such, both the computation of entropic OT solutions in their own right as well as estimation of unregularized OT couplings by passing to the ``low-temperature'' limit $\varepsilon \to 0$ has seen a recent surge of applications in high-dimensional problems, from training losses for generative models \cite{genevay_learning_2018} to the mentioned trajectory inference problem in single-cell biology \cite{schiebinger_optimal-transport_2019}.

Seeking minima of an entropically regularized cost or potential function is ubiquitous in statistical physics: the entropic OT problem \eqref{eq:eot_intro} is equivalent to minimizing the Gibbs free energy \eqref{eq:static_sb} at temperature $\varepsilon$. 
This arises from the sequence of reductions from Schr\"odinger's dynamic problem \eqref{eq:dynamic_sb} which involved paths with Gaussian fluctuations of variance $\varepsilon$ (for which there are noise-free versions of the thought experiment that lead naturally to Wasserstein geodesics, p. 31 \cite{leonard_survey_2013}).
Yet curiously, despite the fact that the Sinkhorn algorithm is derived so naturally from the optimality system \eqref{eq:eot_optimal} (and later \eqref{eq:schr_map}), it is not (in an obvious manner, at least) a Wasserstein gradient descent of the free energy \eqref{eq:static_sb}, as one would expect from the nonequilibrium statistical physics point of view \cite{markowich_trend_nodate}.
This would result in a (marginal-constrained) Fokker-Planck or Langevin dynamics on the joint space $X\times Y$, which was in fact investigated in \cite{conforti_projected_2023} and appears to be distinct.
On the other hand, viewing the Sinkhorn algorithm via combined half-steps (so one marginal is fixed along each full step, with the other free and tending asymptotically to the constraint) shows that it is a mirror gradient descent in the space of probability measures \cite{aubin-frankowski_mirror_2022} with dual variable precisely the Schr\"odinger potential corresponding to the free marginal.
This interpretation is closer in spirit to Schr\"odinger's original question \eqref{eq:schr_question}: we fix by observation one marginal at $\mu$ and ask for the limiting path measure as the empirical distribution converges to the terminal constraint $\nu$. 
Further work \cite{karimi_sinkhorn_2024} showed that, by incorporating the one-sided constraint set into the definition of the mirror map, a continuous-time limit of this mirror descent exists as a coupled pair of partial differential equations, called the Sinkhorn flow.

In this work, we take a perspective on the Sinkhorn flow rooted in the statistical physics viewpoint of minimizing the entropic OT \eqref{eq:eot_intro} or static Schr\"odinger bridge \eqref{eq:static_sb} costs: is there a \textit{thermodynamic structure}, by which we mean a gradient flow of relative entropy in some Riemannian metric, in the relaxation dynamics of the free marginal?
As with the Fokker-Planck equation \cite{jordan_variational_1998}, this variational perspective offers more than a physically plausible gradient flow: it potentially brings Bakry-\'Emery theory and the corresponding suite of functional inequalities characterizing diffusion processes \cite{bakry_analysis_2014} to bear upon the geometry and convergence of the Sinkhorn flow.
Convergence analysis of the Sinkhorn algorithm has a rich history, from early results showing that the discrete Sinkhorn iteration is a contraction mapping in a Hilbert projective metric (see \S8.3 \cite{chen_stochastic_2021} for a review and \cite{eckstein_hilberts_2025} for an analogous result in the continuous setting with unbounded cost) to recent work showing exponential convergence in $L^1$ and Wasserstein under mild log-concavity assumptions on the marginals \cite{conforti_quantitative_2024}.
The thermodynamic perspective we take on Sinkhorn here complements this prior analysis with geometric criteria: as is often the case with diffusion processes, the metric defining the entropic gradient yields a Dirichlet form for whom a Poincar\'e-type inequality yields convergence in entropy.
The key criteria in this framework amount to coercivity properties of conditional expectation operators associated to the coupling $\pi_t$ along the flow.

This paper is organized in three sections.
In the first, we show that the Sinkhorn mirror flow is exponentially contracting in the Fisher-Rao metric
if a certain coercivity property of conditional expectations holds.
In the second, we examine one of the terms arising from this coercivity criterion and show that it is in fact the exact entropy decay rate of the Sinkhorn flow.
We find that this is an instance of the de Bruijn identity relating entropy dissipation and the Fisher information typically governing diffusion processes.
In doing so, we answer the above question affirmatively: the free-marginal dynamics of the Sinkhorn flow is in fact the gradient flow of relative entropy (relative to the imposed terminal constraint \eqref{eq:schr_question}) in the Riemannian metric defined by the inverse of a time-inhomogeneous nonlocal diffusion operator.
This is a type of nonlocal Wasserstein space, studied previously in the context of gradient-flow formulations of jump processes.
In the third, we exploit this structure as in the geometry of diffusion to obtain strictly positive entropy dissipation rates, which is made quantitative via a Poincar\'e inequality (no coercivity of conditional expectations is required, only positive entropy regularization), and show that the convergence in entropy is exponential if and only if a logarithmic Sobolev-type inequality holds.
The latter is another coercivity-type property for conditional expectations, as we show via a sufficient condition for the log-Sobolev inequality in the form of a carr\'e-du-champ estimate.
We conclude these results with some practical applications of the log-Sobolev constant, including geometric design criteria for latent spaces in generative models and stopping heuristics.
Lastly, we discuss the physical viewpoint initiated by this question: are there natural systems which provably solve entropic optimal transport or Schr\"odinger bridge problems?

\section{Background}

\subsection{Entropic optimal transport}

We recall the entropy-regularized optimal transport problem \cite{nutz_introduction_nodate}.
Let $X, Y = \R^d$ and $\mathcal{P}(X), \mathcal{P}(Y)$ the corresponding sets of probability measures.
Let $\mu \in \mathcal{P}(X), \nu \in \mathcal{P}(Y)$ be given marginals and $\Pi(\mu, \nu) \subset \mathcal{P}(X\times Y)$ denote the set of \textit{couplings} between these marginals,  i.e. those joint probability measures $\pi \in \mathcal{P}(X\times Y)$ satisfying 
\begin{align}
    \label{eq:left_marginal}
    \int_{X\times Y} f(x)\pi(dx, dy) &=: \int_X f(x)\pi^X(dx) = \int_X f(x)\mu(dx)\\
    \label{eq:right_marginal}
    \int_{X\times Y}g(y)\pi(dx, dy) &=: \int_Y g(y) \pi^Y(dy) = \int_Yg(y)\nu(dy),
\end{align}
for all bounded measurable $f, g$, where  we have defined the marginalizations $(\cdot)^X, (\cdot)^Y$, used throughout.
Equations \eqref{eq:left_marginal}, \eqref{eq:right_marginal} are also the sense in which we denote equality of measures, e.g. $\pi^X = \mu$.
Given a distance cost function $c(x, y) : X\times Y \to \R_+$, the entropic optimal transport problem (denoted ${\rm OT}_\varepsilon$) is, as stated in \eqref{eq:eot_intro},
\begin{align}
    \label{eq:eot}
    {\rm OT}_\varepsilon(\mu, \nu) &= \min_{\pi \in \Pi(\mu, \nu)} \E_\pi[c] + \varepsilon H(\pi | \mu \otimes \nu),
\end{align}
where $H$ is the relative entropy, defined for probability measures as in \eqref{eq:H}.
The existence of the minimizer (denoted throughout as $\pi_*$) in \eqref{eq:eot} is due to the lower semicontinuity of $H$ and narrow compactness of $\Pi(\mu, \nu)$ as noted earlier (see e.g. Theorem 1.10, \cite{nutz_introduction_nodate}).
The Fenchel-Rockafellar dual form of ${\rm OT}_\varepsilon$ \eqref{eq:eot} (4.4, \cite{peyre_computational_2020}) is
\begin{align}
    \label{eq:eot_dual}
    {\rm OT}_\varepsilon(\mu, \nu) &= \sup_{f\in L^1(\mu),\ g\in L^1(\nu)}\E_\mu[f] + \E_\nu[g] - \varepsilon\E_{(\mu\otimes \nu)}\left[\exp\left(\frac{-c + (f \oplus g)}{\varepsilon}\right)\right]
\end{align}
where $(f \oplus g)(x, y) = f(x) + g(y)$ and $f \in L^1(\mu), g \in L^1(\nu)$ are referred to as the \textit{Schr\"odinger potentials} of the solution.
The dual optimality conditions are given by the \textit{Schr\"odinger system}
\begin{align} 
    \label{eq:schr_map}
    \mathcal{S}: (\mu, \nu) &\mapsto (f, g),\quad 
    \begin{cases}
        f(x) = -\varepsilon\log\int_Y\exp\left(\frac{g(y)-c(x, y)}{\varepsilon}\right)\nu(dy)\\
        g(y) = -\varepsilon\log\int_X \exp\left(\frac{f(x) - c(x, y)}{\varepsilon}\right)\mu(dx).
    \end{cases}
\end{align}

\subsection{The Sinkhorn algorithm for ${\rm OT}_\varepsilon$ as a mirror descent}

Throughout, we use $t$ to denote \textit{algorithmic time}, distinct from the \textit{bridge time} $\tau$ \eqref{eq:empirical}.
Denoting the sets of joint probability measures satisfying the one-sided marginal constraints by
\begin{align}
    \Pi(\mu, \cdot) = \{\pi \in \mathcal{P}(X\times Y)\ |\ \pi^X = \mu\},\ \Pi(\cdot, \nu) = \{\pi \in \mathcal{P}(X\times Y)\ |\ \pi^Y=\nu\},
\end{align}
the \textit{Sinkhorn algorithm} \cite{cuturi_sinkhorn_2013, peyre_computational_2020} is an \textit{iterative proportional fitting procedure} (IPFP)
\begin{equation}
\begin{split}
    d\pi_0 &\propto \exp(-c/\varepsilon)d(\mu\otimes \nu)\\
    \pi_{t+\frac12} &= \argmin_{\pi \in \Pi(\cdot, \nu)}H(\pi | \pi_{t})\\
    \pi_{t+1} &= \argmin_{\pi \in \Pi(\mu, \cdot)} H(\pi | \pi_{t+\frac12}),
\end{split}
\label{eq:sinkhorn}
\end{equation}
whose stationary state is the minimizer $\pi_*$ of ${\rm OT}_\varepsilon$ \eqref{eq:eot}.
Standard optimality conditions (\S 6, \cite{nutz_introduction_nodate}) show that the iterations \eqref{eq:sinkhorn} are ``column'' and ``row'' normalizations 
\begin{align}
    d\pi_{t+\frac12}(x, y) &= d\pi_t(x, y)\frac{d\nu}{d\pi_t^Y}(y)\\
    d\pi_{t+1}(x, y) &= d\pi_{t+\frac12}(x, y)\frac{d\mu}{d\pi_{t+\frac12}^X}(x).
\end{align}
This primal iteration corresponds precisely to iteratively applying the dual optimality conditions of the Schr\"odinger system \eqref{eq:schr_map} (Algorithm 6.1, \cite{nutz_introduction_nodate}).
This corresponds to an alternating Bregman projection (with relative entropy $H$ the Bregman divergence) whose basic properties (\S 6.1, \cite{nutz_introduction_nodate}) are 
\begin{align}
H(\pi_*|\pi_t) \le H(\pi_*|\pi_{t-\frac12}),\quad H(\pi_t^X|\mu)+H(\pi_t^Y|\nu) \le H(\pi_t|\pi_{t-\frac12}).
\end{align}

When $\pi_t$ is absolutely continuous with respect to the Lebesgue measure, it was shown in \cite{aubin-frankowski_mirror_2022} that the Sinkhorn algorithm \eqref{eq:sinkhorn} can be written as the constrained mirror descent (MD, also called Bregman gradient descent) \cite{beck_mirror_2003}
\begin{align}
    \label{eq:md_sinkhorn}
    \pi_{t + 1} &= \argmin_{\pi \in \Pi(\mu, \cdot)}DF(\pi)[\pi-\pi_{t}] + D_\varphi(\pi|\pi_{t}),
\end{align}
for $t \in \N$, where $F$ and $\varphi$ are the \textit{objective} and \textit{mirror map} (Bregman potential)
\begin{align}
    \label{eq:objective}
    F(\pi) &= H(\pi^Y|\nu),\quad DF(\pi)[\eta] \equiv \langle\log\frac{d\pi^Y}{d\nu}, \eta^Y\rangle,\quad \varphi(\pi) = H(\pi | \pi_0)
\end{align}
with $D_\varphi$ the Bregman divergence of $\varphi$.
($F$ and $\varphi$ are only directionally differentiable in the space of measures \cite{aubin-frankowski_mirror_2022}, since the interior of the entropy functional domain is empty due to the nonnegativity constraint.
Thus, what is meant throughout by the Fr\'echet derivative-style notation $DF(\pi)[\eta]$ is the directional derivative
\begin{align}
    \label{eq:directional_derivative}
    DF(\pi)[\eta] &:= \lim_{s\to 0^+}\frac{F(\pi + s\eta) - F(\pi)}{s},
\end{align}
which always exists for convex $F$, though can be infinite.
Note that necessarily, if $\pi + s\eta \in \dom(F)$ then $\eta \in L^1$.
Similarly, we shall define iterated differentials
\begin{align}
    \label{eq:second_directional_derivative}
    D^2F(\pi)[\eta_1, \eta_2] &:= \lim_{s\to0^+} \frac{DF(\pi + s\eta_2)[\eta_1] - DF(\pi)[\eta_1]}{s},
\end{align}
which are also one-sided due to nonnegativity.)
For step size $\gamma > 0$, in \cite{karimi_sinkhorn_2024} the $\gamma$-Sinkhorn MD iteration is defined via $\varphi \mapsto \varphi / \gamma$ as
\begin{align}
    \label{eq:gamma_sinkhorn_md}
    \pi_{t+\gamma} &= \argmin_{\pi \in \Pi(\mu, \cdot)}DF(\pi)[\pi-\pi_t] + \frac{D_\varphi(\pi|\pi_t)}{\gamma}.
\end{align}
Its $\gamma \to 0$ limit was shown to be a continuous-time mirror descent in $(L^1, {\rm P})$, where
\begin{align}
    \label{eq:sinkhorn_dual}
    \frac{\partial h}{\partial t} &= -DF(\pi),\quad h = D\varphi(\pi)
\end{align}
is the flow in the dual space $L^1(X\times Y)$, where the dual variable $h$ is (up to addition of $X$-measurable functions) the right Schr\"odinger potential $g$ in \eqref{eq:eot_dual}.
As seen from \eqref{eq:objective}, $h$ evolves only in the $y$-coordinate.
We generally drop the subscripts $h_t, \pi_t$ in favor of $h(t, x, y), \pi(t, x, y)$, etc. in the continuous-time setting, occasionally using $h_t, \pi_t$, etc. to discuss as functions of spatial variables only.
The flow in the primal space $\Pi(\mu, \cdot) \subset {\rm P}(X\times Y)$ is
\begin{align}
    \label{eq:sinkhorn_primal}
    \pi(t, x, y) &= \left[D\varphi^*(h)\right](t, x, y)=  \frac{\pi(0,x,y)e^{h(t,x,y)}}{\int_{Y}\pi(0,x,y')e^{h(t,x,y')}dy'}\mu(x).
\end{align}
Note that, as in the discrete-time mirror descent \eqref{eq:gamma_sinkhorn_md}, we have $\pi^X(t, \cdot) = \mu$ by construction.
In \eqref{eq:sinkhorn_primal}, $\varphi^*$ is a shorthand for the Fenchel conjugate of the indicator-constrained Bregman potential $\varphi + \iota_{\Pi(\mu, \cdot)}$, 
\begin{align}
    (\varphi+ \iota_{\Pi(\mu, \cdot)})^*(h) &= \sup_{\pi' \in \Pi(\mu, \cdot)} \langle\pi', h\rangle - \varphi(\pi'),\quad \iota_{\Pi(\mu, \cdot)}(\pi) = \begin{cases}
        0 & \pi^X = \mu\\
        +\infty & \text{otherwise,}
    \end{cases}
\end{align}
whose first variation is \eqref{eq:sinkhorn_primal} as shown by Lemma 3, \cite{karimi_sinkhorn_2024}.

\subsection{Notation}
In addition to the notation already defined, we use the following abbreviations for $L^2$ inner products:
\begin{align}
    \label{eq:inner_products}
    \langle f, g\rangle_{\pi} := \langle f, g\rangle_{L^2(\pi)} = \int_{X\times Y} f(x, y)g(x, y)\pi(dx, dy),\quad \langle \cdot, \cdot\rangle := \langle \cdot, \cdot\rangle_{L^2}
\end{align}
with the domains of integration implied by the measure $\pi$.
Moreover, while $f, g, \pi$ may also depend on time (e.g. $\pi(t, x, y)$), inner products \eqref{eq:inner_products} denote integration \textbf{only in space}.
We also denote the subspace of mean-zero functions by 
\begin{align}
    \label{eq:mean_zero}
    L_0^2(\pi) &:= \{f \in L^2(\pi)\ |\ \E_\pi[f] = 0\},
\end{align}
with $L_0^2(\R^d)$ or just $L_0^2$ denoting the mass-zero subspace.
Finally, we shall denote the disintegrations (conditional measures) by
\begin{align}
    \pi(dx, dy) = \pi(dx|y)\pi^Y(dy) = \pi(dy|x)\pi^X(dx),
\end{align}
with equality of measures in the sense of \eqref{eq:left_marginal}.

\subsection{Definitions}

Let us recall for completeness some standard definitions and their properties.

\begin{definition}[Conditional expectations as projection operators]
\label{def:cond_exp}
Let $(X\times Y, \pi)$ be the probability space.
Define
\begin{align}
    \label{eq:P_pi}
    (P_\pi f)(y) &:= \E_\pi[f | Y=y] = \int_{X}f(x, y)\pi(dx|y) \in L^2(\pi^Y),
\end{align}
which is an orthogonal projection (after canonically embedding $L^2(\pi^Y)$ back in $L^2(\pi)$ via $h(x, y) := h(y)$, which we take for granted throughout), since for all $f, g \in L^2(\pi)$, $P_\pi$ is:
\begin{enumerate}
    \item a projection, by the tower property
    \begin{align}
        \label{eq:p_idemp}
        P_\pi P_\pi f &= \E_\pi[\E_\pi[f|Y]|Y] = \E_\pi[f|Y] = P_\pi f,
    \end{align}
    \item self-adjoint, by
    \begin{align}
        \label{eq:p_adj}
        \langle P_\pi f, g\rangle_{L^2(\pi)} &= \iint_{X\times Y}g(x, y)\pi(dx|y)\int_Xf(x', y)\pi(dx', dy) = \langle f, P_\pi g\rangle_{L^2(\pi)}.
    \end{align} 
    \item bounded and a contraction, by Jensen's inequality
    \begin{align}
        \label{eq:P_pi_bounded}
        \norm{P_\pi f}_{L^2(\pi)}^2 &= \E_\pi[(\E_\pi[f|Y])^2] \le \E_\pi[\E_\pi[f^2|Y]] = \norm{f}_{L^2(\pi)}^2.
    \end{align}
    \item the orthogonal projection onto the closed subspace
    \begin{align}
        \im P_\pi &= \{g \in L^2(\pi)\ |\ \exists h \in L^2(\pi^Y)\ {\rm s.t.}\ g(x, y) = h(y)\ {\rm for}\ \pi-{\rm a.e.} (x, y)\}.
    \end{align}
\end{enumerate}
Similarly, define the projection $Q_\pi : L^2(\pi) \to L^2(\pi^X)$
\begin{align}
\label{eq:Q_pi}
(Q_\pi f)(x) &:= \E_{\pi}[f|X=x] = \int_Yf(x, y)\pi(dy|x)
\end{align}
which also has properties (1), (2), and (3) as above with (4) being
\begin{align}
    \im Q_\pi &= \{g \in L^2(\pi)\ |\ \exists f \in L^2(\pi^X)\ {\rm s.t.}\ g(x, y) = f(x)\ {\rm for}\ \pi-{\rm a.e.} (x, y)\}.
\end{align}
\end{definition}

\begin{definition}[Numerical range]
\label{def:num_range}
Let $A \in \mathcal{B}(H)$ be a bounded linear operator on a Hilbert space $H$. 
Its \textit{numerical range} $W(A)$ is the subset of the complex plane 
\begin{align}
    W_H(A) &= \{\frac{\langle v, Av\rangle_H}{\langle v, v\rangle_H}\ |\ v \in H,\ v \ne 0\},
\end{align}
which is equivalently the map of the unit sphere $\norm{v}_H=1$ under $v \mapsto \langle v, Av\rangle_H$.
\end{definition}

\begin{definition}[Coercivity]
We call an operator $A$ as in \ref{def:num_range} $\lambda$-\textit{coercive} in a real Hilbert space $H$ if $\inf W_H(A) = \lambda > 0$.
\end{definition}

\subsection{Assumptions}
\label{sec:assumptions}
For simplicity, we shall assume $\mu, \nu$ are absolutely continuous with respect to the Lebesgue measure in the following and use $\mu, \nu, \pi$ to interchangeably represent measures and densities, e.g. $d\mu(x) = \mu(x)dx$. 
This precludes $\mu$ or $\nu$ being empirical distributions, but we believe many, if not all, of the  arguments presented here can be adapted to the general case.
We also assume that as densities, $\mu, \nu > 0$ Lebesgue-a.e., so that $\pi_0 > 0$ and $\pi_t > 0$ L-a.e. from definitions \eqref{eq:sinkhorn} and \eqref{eq:sinkhorn_primal}. 
Then, the marginals will be expressed as the exponential densities
\begin{align}
    \label{eq:densities}
    d\mu(x) &\propto \exp(-U(x))dx,\quad d\nu(y) \propto \exp(-V(y))dy.
\end{align}

\section{Results}

\subsection{Contraction in weighted $L^2$ space}

From the dual flow \eqref{eq:sinkhorn_dual}, one might expect that the entropy $H(\pi^Y|\nu)$ of the right marginal should decay in time.
Indeed, \cite{karimi_sinkhorn_2024} shows a ``second law'' $dH(\pi_t^Y|\nu)/dt \le 0$ as well as the sublinear convergence rate $H(\pi_t^Y|\nu) = O(t^{-1})$, consistent with known results for the discrete-time Sinkhorn algorithm (\S 6.2, \cite{nutz_introduction_nodate}).
(Here, $H$ is the negative of Shannon's entropy, so this corresponds physically to entropy \textit{production}. 
We also use ``entropy'' and the more precise term ``relative entropy'' interchangeably, with the latter being a change of reference measure.)
On the other hand, from the semigroup theory of diffusions \cite{bakry_analysis_2014}, it is often the case that strongly contractive Markov processes (typically, on weighted $L^2$ space) correspond to exponential convergence in entropy, with the rates of contraction and entropy decay related up to constants. 
Here, we similarly show that entropy decay rates of the Sinkhorn flow are tightly linked to contraction estimates in weighted $L^2$ spaces, finding that\textemdash despite not being a typical diffusion process\textemdash it satisfies the same entropy-energy identity as diffusion.

To establish the setting for geodesic contraction as in the finite-dimensional case \cite{lohmiller1998contraction}, we first give a consistent specification of tangent and cotangent spaces along the flow \eqref{eq:sinkhorn_dual}, \eqref{eq:sinkhorn_primal} which are compatible with the constraint set $\Pi(\mu, \cdot)$ \eqref{eq:md_sinkhorn}.

\begin{definition}[Tangent and cotangent spaces along Sinkhorn mirror flow]
\label{def:tangent_cotangent}
We define the tangent space to the constraint set $\Pi(\mu, \cdot) \subset {\rm P}(X\times Y)$ as 
\begin{align}
    \label{eq:tangent}
    T_\pi\Pi(\mu, \cdot) &:= \{\delta \pi \in L^2(1/\pi)\ |\ \delta\pi^X = 0\ \text{a.e.}\},\quad \pi \in \Pi(\mu, \cdot),
\end{align}
where $\delta \pi^X$ denotes $\int_Y\delta \pi(x, y)dy$.
We also define the cotangent space as
\begin{align}
    \label{eq:cotangent}
    T_\pi^*\Pi(\mu, \cdot) &:= \frac1\pi T_\pi \Pi(\mu, \cdot) = \{\delta h \in L^2(\pi)\ |\ \pi \delta h \in T_\pi\Pi(\mu, \cdot)\},
\end{align}
both of which are well-defined along the Sinkhorn flow \eqref{eq:sinkhorn_primal} since $\pi > 0$ a.e.
The tangent and cotangent spaces are related by the \textit{Fisher-Rao metric} 
\begin{align}
    \label{eq:fisher_rao}
    \langle a, b\rangle_{{\rm FR}(\pi)} &= \iint_{X\times Y} \frac{ab}{\pi}dxdy,\quad a, b\in T_\pi\Pi(\mu, \cdot),
\end{align}
which is guaranteed finite by definition \eqref{eq:tangent}.
Lastly, we note that the cotangent space is equivalently $T_\pi^*\Pi(\mu, \cdot) = \ker Q_\pi$, viewing $Q_\pi$ as an orthogonal projection of $L^2(\pi)$ as in Definition \ref{def:cond_exp}.
\end{definition}

\begin{remark}
We do not employ the tangent space $T_\pi\Pi(\mu, \cdot)$ as a set of feasible directions along which the second differential \eqref{eq:second_directional_derivative} $D^2H(\pi)$ of entropy is assumed to exist.
As mentioned earlier, $\dom(H)$ has empty interior due to the restriction to strictly positive densities.
Therefore, additional constraints are required; for example, a sufficient (but possibly too restrictive) definition is
\begin{align}
    \label{eq:relative_bounded}
    \Tilde{T}_\pi \Pi(\mu, \cdot) &= \{\delta \pi \in T_\pi\Pi(\mu, \cdot)\ |\ \frac{\delta \pi}{\pi} \in L^\infty\},\quad \Tilde{T}_\pi^* \Pi(\mu, \cdot) = \ker Q_\pi \cap L^\infty.
\end{align}
The relative boundedness ensures that $H$ possesses directional derivatives, since there exists $s>0$ such that $\pi + s\delta \pi > 0$ a.e.
This also satisfies the necessary criterion mentioned in \eqref{eq:directional_derivative}, since $\norm{\delta \pi}_{L^1} \le \norm{\delta \pi / \pi}_{L^\infty}$, and moreover subsumes the weaker integrability in the original definition of the tangent space \eqref{eq:tangent}, since $\norm{\delta \pi}_{L^2(1/\pi)} \le \norm{\delta \pi / \pi}_{L^\infty}$.

Under a suitable constraint on tangent directions such as \eqref{eq:relative_bounded}, one recovers the well-known fact from information geometry \cite{amari_information_2016} that the Fisher-Rao metric \eqref{eq:fisher_rao} is the second variation of entropy:
\begin{align}
    \label{eq:d2h}
    D^2H(\pi)[\delta \pi_1, \delta \pi_2] & = \langle\delta\pi_1, \delta \pi_2\rangle_{{\rm FR}(\pi)},\quad \delta \pi_1, \delta \pi_2 \in \Tilde{T}_\pi\Pi(\mu, \cdot).
\end{align}
By an identical argument, $D^2F(\pi)$ is well-defined for directions $\delta \pi \in \Tilde{T}_\pi\Pi(\mu, \cdot)$, and 
\begin{align}
    \label{eq:d2f}
    D^2F(\pi)[\delta \pi, \delta \pi] &= \int_Y \frac{\delta \pi^Y(y)^2}{\pi^Y(y)}dy = \iint_{X\times Y} \left(P_\pi \frac{\delta \pi}{\pi}\right)^2\pi dxdy \le \norm{\frac{\delta \pi}{\pi}}_{L^2(\pi)}^2 < \infty
\end{align}
exists finite, since $P_\pi$ is an orthogonal projection of $L^2(\pi)$ \eqref{eq:P_pi_bounded} and $\delta \pi \in L^2(1/\pi)$ by definition \eqref{eq:tangent}.
Similarly, with $h = D\varphi(\pi)$ by \eqref{eq:sinkhorn_dual}, we recover that the tangent and cotangent spaces are related by the Hessian of the mirror map \eqref{eq:objective}:
\begin{align}
\delta h = D^2\varphi(\pi)[\delta \pi, *] = \frac{\delta \pi}{\pi} \in \Tilde{T}^*_\pi\Pi(\mu, \cdot),
\end{align}
(using $*$ to indicate an argument which is an element of $\Tilde{T}_\pi\Pi(\mu, \cdot)$, i.e. the duality pairing, distinct from $\cdot$ in the latter indicating an unconstrained right marginal.)
\end{remark}

In the following, rather than constructing a space of feasible directions \textit{a priori} for differentials of entropy as in \eqref{eq:relative_bounded}, we shall instead differentiate the flow along curves in the constraint set $\Pi(\mu, \cdot)$, using the tangent and cotangent spaces \eqref{eq:tangent}, \eqref{eq:cotangent} as domains for relevant operators.
This is closer in spirit to the decay of length functionals over curves pushed forward by gradient flows in the Wasserstein space \cite{ambrosio_gradient_2008}.
With the Fisher-Rao metric naturally relating the tangent and cotangent spaces \eqref{eq:tangent}, \eqref{eq:cotangent} along the Sinkhorn mirror flow, we now consider contraction in the induced geodesic distance.

\begin{definition}[Hellinger-Fisher-Rao and restricted HFR distances]
\label{def:hfr}
We recall that the Fisher-Rao metric $\langle \cdot, \cdot\rangle_{\rm FR}$ \eqref{eq:fisher_rao} metrizes ${\rm P}(X\times Y)$ as a geodesic space via the \textit{Fisher-Rao} or \textit{Hellinger-Fisher-Rao} (HFR) distance (see \cite{mielke_notes_2025} for the nonparametric, infinite-dimensional case):
\begin{align}
    \label{eq:hfr}
    d_{{\rm HFR}(E)}^2(\pi, \pi') &= \inf_{\substack{\gamma\in C^1([0,1];E)\\ \gamma(0)=\pi,\ \gamma(1)=\pi'}}\int_0^1\langle \gamma'(s), \gamma'(s)\rangle_{{\rm FR}(\gamma(s))}ds
\end{align}
with $\gamma'(s) \in T_{\gamma(s)} E$ where $E \subseteq {\rm P}(X\times Y)$ is some admissible set.
By $\gamma \in C^1([0, 1], E)$, we mean the square-root embedding into the $L^2$ sphere
\begin{align}
    \langle \gamma'(s), \gamma'(s)\rangle_{{\rm FR}(\gamma)} &= 4\langle \frac{d}{ds} \sqrt{\gamma}, \frac{d}{ds}\sqrt{\gamma}\rangle_{L^2},
\end{align}
i.e. $\sqrt{\gamma} \in C^1([0, 1], L^2)$ hence $\gamma$ is differentiable in $L^1$.
In the following, we take $E \subseteq \Pi(\mu, \cdot)$, the left-marginal-constrained subset.
\end{definition}

We obtain the following criterion for contraction in the Fisher-Rao metric.

\begin{theorem}[Contraction of Sinkhorn flow in Hellinger-Fisher-Rao]
\label{thm:cont_fr}
Let $E \subseteq \Pi(\mu, \cdot) \subset L^1$ be any set such that (i) $E$ is $C^1$ path-connected in $L^1$, i.e. for $\pi, \pi' \in E$ there exists $\gamma \in C^1([0, 1], E)$ with $\gamma(0) = \pi, \gamma(1) = \pi'$, and (ii) the pushforward $\Gamma(s, t)$ of any such path $\gamma(s)$ under the Sinkhorn flow \eqref{eq:sinkhorn_dual}, \eqref{eq:sinkhorn_primal} exists, remains within $E$, and retains this regularity as $\Gamma \in C^1([0, 1]\times [0, \infty), E)$ with commutativity of $\partial/\partial t, \partial/\partial s$.

Then, if for some $\lambda \in \R$, all $\pi \in E$, and cotangent vectors $\xi \in T_\pi^*E$ defined as in \ref{def:tangent_cotangent}, the coercivity
\begin{align}
    \label{eq:contraction_hfr}
    \langle \xi, \left[2P_{\pi} + (I - Q_{\pi})\log\frac{d\pi^Y}{d\nu}\right]\xi\rangle_{\pi} &\ge \lambda \langle \xi, \xi\rangle_{\pi}
\end{align}
holds, then for any pair of particular solutions $\pi_t, \pi_t' \in E$, 
\begin{align}
    \label{eq:hfr_statement}
    d^2_{{\rm HFR}(E)}(\pi_t, \pi_t') &\le e^{-\lambda t}d^2_{{\rm HFR}(E)}(\pi_0, \pi_0'),
\end{align}
i.e. the Fisher-Rao distance is contracting or maximally expanding with rate $\magn{\lambda}$.
\end{theorem}
\begin{proof}
First, let us note that the statement \eqref{eq:contraction_hfr} is well-defined since $\xi \in L^2(\pi)$ by Definition \ref{def:tangent_cotangent}.
Next, we note a useful identity along the mirror flow \eqref{eq:sinkhorn_dual}, \eqref{eq:sinkhorn_primal}:
\begin{align}
    \label{eq:dlogp_1}
    \frac{\partial}{\partial t} \log \pi &= \frac{\partial h}{\partial t} - \frac{\partial}{\partial t}\log Z,\quad Z(t, x) = \int_Y\pi_0(x, y')e^{h(t, x, y')}dy'.
\end{align}
Since the left marginal $\pi^X(t, \cdot) = \mu$ is fixed along the flow, we have
\begin{align}
    \label{eq:diff_marg}
    0 &= \frac{\partial \pi^X}{\partial t} = \left(\frac{\partial \pi}{\partial t}\right)^X = \left(\pi\frac{\partial}{\partial t}\log \pi\right)^X = \mu Q_\pi \left(\frac{\partial}{\partial t}\log \pi\right)
\end{align}
with interchange of differentiation and marginalization justified by hypothesis $\pi(t, \cdot, \cdot) \in C^1([0, \infty), L^1)$ and the fact that the marginalization operators $(\cdot)^X, (\cdot)^Y$ are continuous on $L^1$.
In the last equality, $Q_\pi$ is the conditional expectation defined in \ref{def:cond_exp}.
Since $Z$ is only a function of $(t, x)$, we have from \eqref{eq:dlogp_1} that
\begin{align}
    \frac{\partial}{\partial t}\log Z &= Q_\pi \frac{\partial h}{\partial t}
\end{align}
(which avoids needing to differentiate the log-partition directly) and hence
\begin{align}
    \label{eq:dlogpi_dt}
    \frac{\partial}{\partial t}\log \pi &= (I - Q_\pi) \frac{\partial h}{\partial t} := Q_\pi^\perp \frac{\partial h}{\partial t},
\end{align}
where we have defined $Q_\pi^\perp := (I - Q_\pi)$.

Now, let $\pi(t, \cdot, \cdot)$ and $\hat{\pi}(t, \cdot, \cdot)$ be any pair of particular solutions in $E$.
Let $\gamma(s) \in C^1([0, 1], E)$ be any path between the initial conditions, and $\Gamma(s, t)$ be the pushforward of this map under the flow \eqref{eq:sinkhorn_dual}, \eqref{eq:sinkhorn_primal}, i.e. 
\begin{align}
    \Gamma(0, t) &= \pi_t,\ \Gamma(1, t) = \hat{\pi}_t,\ \Gamma(s, 0) = \gamma(s),
\end{align}
which remains within $E$ by forward invariance.
We may assume the Fisher-Rao metric $\langle \gamma'(s), \gamma'(s)\rangle_{{\rm FR}(\gamma(s))}$ is finite along the initial path $\gamma$ as otherwise the upper bound \eqref{eq:hfr_statement} is trivial.
Correspondingly, denote the pushforward of the dual variable \eqref{eq:sinkhorn_dual} under the flow by $\Xi(s, t)$.
We can apply an identical argument as in \eqref{eq:diff_marg} in the $s$ variable since, for each fixed $t_*$, we have $\Gamma(s, t_*, \cdot, \cdot) \in C^1([0, 1], L^1)$.
It also satisfies the bound
\begin{align}
    \norm{\frac{\partial \Gamma}{\partial s}}_{L^1} &\le 2\norm{\sqrt{\Gamma}}_{L^2}\norm{\partial_s\sqrt{\Gamma}}_{L^2} = 2\norm{\frac{\partial}{\partial s}\sqrt{\Gamma}}_{L^2} = \sqrt{\langle \frac{\partial \Gamma}{\partial s}, \frac{\partial \Gamma}{\partial s}\rangle}_{{\rm FR}(\Gamma)}.
\end{align}
Thus, we have the pair of identities
\begin{align}
    \label{eq:log_evolutions}
    \frac{\partial}{\partial t}\log \Gamma &= Q_\Gamma^\perp\frac{\partial \Xi}{\partial t},\quad \frac{\partial}{\partial s}\log \Gamma = Q_\Gamma^\perp\frac{\partial \Xi}{\partial s}.
\end{align}
Next, from the dual flow \eqref{eq:sinkhorn_dual}, we have by commutativity of mixed partials 
\begin{align}
    \label{eq:dh_dynamics}
    \frac{\partial}{\partial t} \frac{\partial \Xi}{\partial s} &= -\frac{\partial}{\partial s}\log\frac{\Gamma^Y}{\nu} = -\frac{1}{\Gamma^Y}\frac{\partial \Gamma^Y}{\partial s} = -P_\Gamma \left(\frac{1}{\Gamma}\frac{\partial\Gamma}{\partial s}\right) = -P_\Gamma Q_\Gamma^\perp\frac{\partial \Xi}{\partial s}.
\end{align}
with interchange of differentiation and marginalization in the third equality again justified as before.
Finally, note from \eqref{eq:log_evolutions} that, for some function $g$ such that $g\Gamma$ is $t$-differentiable in $L^1$,
\begin{align}
    \label{eq:Q_dot}
    \frac{\partial}{\partial t}(Q_\Gamma g) &= \frac{1}{\mu}\int_Y(\frac{\partial g}{\partial t} + g\frac{\partial}{\partial t}\log \Gamma)\Gamma dy = Q_\Gamma(\frac{\partial g}{\partial t} + gQ_\Gamma^\perp\frac{\partial \Xi}{\partial t}).
\end{align}
With these ingredients in hand, the Fisher-Rao metric evolves as
\begin{align}
    \frac{d}{dt}\langle \frac{\partial \Gamma}{\partial s}, \frac{\partial \Gamma}{\partial s}\rangle_{{\rm FR}(\Gamma)} &= \frac{d}{dt} \langle Q_\Gamma^\perp \frac{\partial \Xi}{\partial s}, Q_\Gamma^\perp \frac{\partial \Xi}{\partial s}\rangle_\Gamma\\
    &= -2\langle Q_\Gamma^\perp\frac{\partial \Xi}{\partial s}, Q_\Gamma^\perp P_\Gamma Q_\Gamma^\perp\frac{\partial \Xi}{\partial s} + Q_\Gamma(\frac{\partial \Xi}{\partial s}Q_\Gamma^\perp\frac{\partial \Xi}{\partial t})\rangle_\Gamma \\
    &+ \langle Q_\Gamma^\perp \frac{\partial \Xi}{\partial s}, \frac{\partial \log \Gamma}{\partial t}Q_\Gamma^\perp\frac{\partial \Xi}{\partial s}\rangle_\Gamma.
\end{align}
Since $Q_\Gamma, Q_\Gamma^\perp$ are $L^2(\Gamma)$ projections, using the identities \eqref{eq:log_evolutions} and substituting the dual flow \eqref{eq:sinkhorn_dual}, this finally simplifies to
\begin{align}
    \label{eq:hfr_penultimate}
   &= -\langle \frac{1}{\Gamma}\frac{\partial \Gamma}{\partial s}, \left[2P_\Gamma + Q_\Gamma^\perp\log\frac{\Gamma^Y}{\nu}\right]\frac{1}{\Gamma}\frac{\partial \Gamma}{\partial s}\rangle_\Gamma.
\end{align}

Note that since $\Gamma \in E \subseteq \Pi(\mu, \cdot)$, we have $\frac{1}{\Gamma}\frac{\partial \Gamma}{\partial s} \in T_\Gamma^* E$ by Definition \ref{def:tangent_cotangent}.
Then \eqref{eq:hfr_penultimate} leads naturally to the coercivity hypothesis \eqref{eq:contraction_hfr}, which implies that
\begin{align}
    \label{eq:fr_evolve}
    &\le -\lambda \langle \frac{1}{\Gamma}\frac{\partial \Gamma}{\partial s}, \frac{1}{\Gamma}\frac{\partial \Gamma}{\partial s}\rangle_\Gamma = -\lambda \langle \frac{\partial \Gamma}{\partial s}, \frac{\partial \Gamma}{\partial s}\rangle_{{\rm FR}(\Gamma)}.
\end{align}
Hence, by definition \eqref{eq:hfr}, we have
\begin{align}
    d^2_{{\rm HFR}(E)}(\pi_t, \hat{\pi}_t) &\le \int_0^1 \langle \frac{\partial \Gamma}{\partial s},\frac{\partial \Gamma}{\partial s}\rangle_{{\rm FR}(\Gamma)}ds \le e^{-\lambda t}\int_0^1 \langle \frac{\partial \gamma}{\partial s},\frac{\partial\gamma}{\partial s}\rangle_{{\rm FR}(\gamma)}.
\end{align}
Taking the infimum over initial paths $\gamma$ gives the result. 
\end{proof}

We make some observations regarding the coercivity property \eqref{eq:contraction_hfr}.

\begin{remark}
\label{rem:P_coer}
The first term, the $L^2(\pi)$ coercivity $\langle \xi, P_\pi\xi\rangle_\pi = \norm{P_\pi\xi}_{L^2(\pi)}^2 \ge 0$ is guaranteed by the projection property.
As defined, $\ker P_\pi \cap T_\pi^*\Pi(\mu, \cdot)$ is nontrivial, so this is the best we can expect under the general definition \eqref{eq:cotangent}.
However, recalling that the dual variable $h(t, \cdot, \cdot)$ \eqref{eq:sinkhorn_dual} evolves only in the $y$-coordinate (being the right Schr\"odinger potential up to additive $X$-functions), we have that the same property holds for cotangent vectors $\delta h$ along particular solutions, as reflected in its dynamics \eqref{eq:dh_dynamics} which are projected to the space of $Y$-measurable functions by $P_\pi$.
Under this additional structure, which is not captured by Definition \ref{def:tangent_cotangent}, one may expect a nonzero positive coercivity to hold in a manner controlled by initial conditions.
In the following sections, we show via related facts that the entropy decay rate is in fact strictly positive.
\end{remark}

\begin{remark}
The second term in \eqref{eq:fr_evolve}, which arises from time-dependence of the metric, could be eliminated by considering instead the metric induced by the square of the mirror Hessian $(D^2\varphi(\pi))^2$ instead, as is done in the finite-dimensional case (Example 4, \cite{wensing_beyond_2020}).
Using $\delta \pi, \delta h$ to denote tangent and cotangent elements as in \ref{def:tangent_cotangent}, this results in the cotangent unweighted norm
\begin{align}
    \label{eq:metric_2}
    \langle \delta \pi, \delta \pi\rangle_{1/\pi^2} &= \norm{\delta h}_{L^2}^2
\end{align}
which requires that $T_\pi^*E \subset L^2$ rather than only $T_\pi^*E \subset L^2(\pi)$ as in \eqref{eq:cotangent} and could be overly restrictive.
The existence of finite-energy geodesics in the metric \eqref{eq:metric_2} is also not \textit{a priori} guaranteed.
Nevertheless, a formal calculation shows
\begin{align}
    \label{eq:cont_metric_2}
    \frac{d}{dt}\frac12\norm{\delta \pi}_{L^2(1/\pi^2)}^2 &= \frac{d}{dt}\frac12\norm{\delta h}_{L^2}^2 = \langle \delta h,\frac{\partial}{\partial t}\delta h\rangle = -\langle \delta h, D^2 F(\pi)\delta \pi\rangle\\
    &= -\langle \frac{\delta \pi^Y}{\pi^Y}, \left(\frac{\delta \pi}{\pi}\right)^Y\rangle_{L^2(Y)} = -\langle P_{\pi}\frac{\delta \pi}{\pi}, \frac{\delta \pi}{\pi}\rangle_{L^2(X\times Y)}.
\end{align}
However, the required coercivity for exponential decay in this metric \eqref{eq:cont_metric_2},
\begin{align}
    \langle P_\pi \delta h, \delta h\rangle_{L^2(X\times Y)} &\ge \lambda \norm{\delta h}_{L^2}^2
\end{align}
does not hold in general with $\lambda \ge 0$, since for $\pi = \mu \otimes \nu$ and $\delta h = a\otimes b$ with $\E_\nu[b]=0$, which satisfies the tangency constraint \eqref{eq:cotangent}, 
\begin{align}
    \langle P_\pi \delta h, \delta h\rangle &= \iint_{X\times Y}a(x)b(y)^2\E_\mu[a]dxdy = \norm{b}_{L^2(Y)}^2\E_\mu[a]\int_X a(x)dx,
\end{align}
which is negative whenever the $\mu$-weighted and unweighted means of $a \in L^2$ have opposite signs.
Moreover, whereas $P_\pi$ is bounded and self-adjoint in $L^2(\pi)$ \ref{eq:p_adj}, it is not necessarily on the unweighted space; its unweighted $L^2$-adjoint is formally
\begin{align}
    \langle P_\pi f, g\rangle &= \iint_{X\times Y}\frac{g(x, y)}{\pi^Y(y)}dx\int_Xf(x', y)\pi(x', y)dx'dy = \langle f, \pi P_\pi \left[\frac{g}{\pi}\right]\rangle = \langle f, \pi D^2F(\pi) g\rangle.
\end{align} 
\end{remark}

\begin{remark}
In light of the possible coercivity of $P_\pi$ in a restricted subset of $T_\pi^*\Pi(\mu, \cdot)$ as noted in Remark~\ref{rem:P_coer}, a natural question is whether there is a bound $b$ on the second term in \eqref{eq:contraction_hfr} which arises from time-dependence of the metric, as
\begin{align}
    \norm{(I - Q_{\pi})\log\frac{d\pi^Y}{d\nu}}_{L^2(\pi)}^2 \le b.
\end{align}
We show in the next section that this is in fact the exact entropy decay (production) rate of the Sinkhorn flow, a de Bruijn identity.
\end{remark}

\subsection{Entropy decay and gradient flow structure}
\label{sec:thermodynamics}

We give an exact identity for the entropy decay rate of the Sinkhorn flow.
We then show that the flow, viewed via the dynamics of the free marginal $\pi^Y$, is in fact the gradient flow of entropy in a (formal) Riemannian metric.
This reveals that the entropy decay identity is in fact an instance of the entropy-energy (de Bruijn) identity characteristic of diffusion processes.

\begin{theorem}[Entropy decay rate of Sinkhorn]
\label{thm:sinkhorn_epr}
The Sinkhorn flow \eqref{eq:sinkhorn_dual}, \eqref{eq:sinkhorn_primal} is decaying in relative entropy of the right marginal as
\begin{align}
    \label{eq:sinkhorn_epr}
    \frac{d}{dt}H(\pi^Y|\nu) &= \frac{d}{dt}F(\pi) = -\norm{(I - Q_{\pi})\log\frac{d\pi^Y}{d\nu}}_{L^2(\pi)}^2
\end{align}
along solutions $\pi_t \in C^1([0, \infty), L^1)$.
\end{theorem}
\begin{proof}
Let $u(t, y) := \log\frac{d\pi^Y(t, y)}{d\nu(y)}$. 
Since $\pi^Y(t, \cdot)$ is a probability measure,
\begin{align}
\frac{d}{dt} H(\pi^Y|\nu) &= \int_Y \frac{\partial \pi^Y}{\partial t}udy.
\end{align}
From \eqref{eq:dlogpi_dt} and the dual flow \eqref{eq:sinkhorn_dual},
\begin{align}
    \label{eq:dpi_ty_dt}
    \frac{\partial \pi^Y}{\partial t}(t, y) &= \int_X \frac{\partial \pi}{\partial t}(t, x, y)dx = -\int_X[Q_{\pi}^\perp u](t, x, y)\pi(t, x, y)dx.
\end{align}
with differentiation under marginalization justified by the $L^1$ differentiability as in Theorem \ref{thm:cont_fr}.
Hence,
\begin{align}
    \label{eq:dhdt_punchline}
    \frac{d}{dt}H(\pi^Y|\nu) &= -\iint_{X\times Y}u(t, y)(Q_{\pi}^\perp u)(t, x, y)\pi(t, x, y)dxdy = -\norm{Q_{\pi}^\perp u}_{L^2(\pi)}^2
\end{align}
since $Q_{\pi}^\perp$ is an orthogonal projection on $L^2(\pi)$ (Definition~\ref{def:cond_exp}).
\end{proof}

\begin{remark}
Expression \eqref{eq:dhdt_punchline} can be written as, for some function $g \in L^2(\pi^Y)$,
\begin{align}
    \label{eq:L_expansion}
    \langle g, (I - Q_\pi)g\rangle_{\pi} = \langle g, P_\pi(I-Q_\pi)g\rangle_{\pi^Y} = \langle g, (I - P_\pi Q_\pi)g\rangle_{\pi^Y}
\end{align}
    since $P_\pi$ commutes with multiplication by $Y$-measurable functions (and $Q_\pi$ with $X$-measurable functions).
This motivates the definition of the following operator.
\end{remark}

\begin{definition}[Forward-backward conditional expectation]
\label{def:S_pi}
Let $g$ be any $Y$-measurable function.
With $\pi \in {\rm P}(X\times Y)$, let us define
\begin{align}
    \label{eq:S_pi}
    (S_\pi g)(y) &:= (P_\pi Q_\pi g)(y) = \E_\pi[\E_\pi[g(Y)|X]|Y=y],
\end{align}
which maps $L^2(\pi^Y) \to L^2(\pi^Y)$ (compare with the domains of $P_\pi, Q_\pi$ in \ref{def:cond_exp} which are the whole  of $L^2(\pi)$).
$S_\pi$ is self-adjoint on $L^2(\pi^Y)$ since, for $f$ also in $L^2(\pi^Y)$,
\begin{align}
    \langle f, S_\pi g\rangle_{\pi^Y} &= \langle f, Q_\pi g\rangle_{\pi} = \langle Q_\pi f, g\rangle_{\pi} = \langle S_\pi f, g\rangle_{\pi^Y}.
\end{align}
Moreover, $S_\pi \bm{1} = \bm{1}$, hence $S_\pi$ satisfies the necessary conditions to be a symmetric (reversible) Markov operator (\S 1.6.1, \cite{bakry_analysis_2014}).
Its stationary measure is $\pi^Y$, since for every $f \in L^2(\pi^Y)$,
\begin{align}
    \int_Y \pi^Y (S_\pi f) dy &= \int_X \pi^X(Q_\pi f)dx = \int_Y \pi^Y fdy
\end{align}
hence $S_\pi^* \pi^Y = \pi^Y$.
\end{definition}

The dynamics of the right marginal \eqref{eq:dpi_ty_dt} can be written using Definition \ref{def:S_pi} as
\begin{align}
    \label{eq:dpi_ty_2}
    \frac{\partial \pi^Y}{\partial t}  = -\pi^YP_{\pi}(I - Q_{\pi})\log\frac{d\pi^Y}{d\nu} = -\pi^Y(I - S_\pi)\log\frac{d\pi^Y}{d\nu}.
\end{align}
Hence, defining $L_{\pi} := (I - S_\pi)$, \eqref{eq:dpi_ty_2} becomes the \textit{natural gradient} flow \cite{amari_natural_1998} of entropy in the space of right marginals ${\rm P}(Y)$:
\begin{align}
    \label{eq:gradient_flow_entropy}
    \frac{\partial \pi^Y}{\partial t}= -\pi^Y L_{\pi}DH(\pi^Y|\nu) =: -K_\pi DH(\pi^Y|\nu)
\end{align}
in the (formal) Riemannian metric defined by the inverse of the \textit{Onsager operator} $K_\pi := \pi^Y L_{\pi}$.
(Note that constants appearing from $DH$ are annihilated since $S_\pi$ is Markov, hence $L_\pi \bm{1} = 0$.)
Gradient flows of entropy in Riemannian geometries appear ubiquitously in out-of-equilibrium thermodynamics, from the celebrated Wasserstein gradient flow formulation of the Fokker-Planck equation \cite{jordan_variational_1998} to generalized gradient flows arising in other Wasserstein-like metrics \cite{dolbeault_new_2009}.
These geometries capture the physics of the system via the (formal) inverse of the Onsager operator \cite{peletier_variational_2014}, which encodes the Onsager mobility relations.
Abbreviating $\rho := \pi^Y$, in \eqref{eq:gradient_flow_entropy}, $K_\pi : T_{\rho}^*{\rm P}(Y) \to T_{\rho}{\rm P}(Y)$ defines the  inverse of the Riemannian metric along a particular solution $\rho(t, \cdot) \in {\rm P}(Y)$ as 
\begin{align}
    \label{eq:sinkhorn_metric}
    \langle\tau_1, \tau_2\rangle_{T_{\rho}{\rm P}(Y)} &:= \langle \psi_1, K_\pi \psi_2\rangle,\quad K_\pi\psi_i = \tau_i,\quad \psi_i \in T_{\rho}^*{\rm P}(Y).
\end{align}
With $DH(\rho|\nu) \in T_\rho^*{\rm P}(Y)$, this gives the dissipation rate of $H$ along \eqref{eq:gradient_flow_entropy} as
\begin{align}
    \label{eq:F_diss}
    \frac{d}{dt}H(\rho|\nu) &= -\langle DH(\rho|\nu), K_\pi DH(\rho|\nu)\rangle.
\end{align}
This yields the fascinating property that the mirror gradient formulation of the Sinkhorn algorithm is in fact the gradient flow of entropy in its free marginal.

We now note a well-known fact which is that mirror descent can also be expressed as a natural gradient descent \cite{gunasekar_mirrorless_2021}; in the unconstrained finite-dimensional setting,
\begin{align}
    \label{eq:mirror_natural_gradient}
    \frac{d}{dt}h &= \frac{d}{dt}\nabla \varphi(x) = \nabla\nabla\varphi(x)\frac{d}{dt}x = -\nabla F(x)
\end{align}
where $x$ is the primal variable, $h$ the dual variable, and $F, \varphi$ the objective and mirror map as in \eqref{eq:objective}.
However, the entropic gradient structure in \eqref{eq:gradient_flow_entropy} is distinct; the inverse of the Fisher-Rao metric, $\pi$, appears in the definition of $K_\pi$ as expected from \eqref{eq:mirror_natural_gradient}, whereas the nonlocal operator $L_\pi$ \eqref{eq:dpi_ty_2} also appears as a consequence of projection to the right-marginal space ${\rm P}(Y)$.
As noted in Remark \ref{rem:P_coer}, this is the natural space in which to consider dissipation rates since the dual variable $h$ only evolves in the $y$-coordinate \eqref{eq:sinkhorn_dual}.
Indeed, the flow of the right Schr\"odinger potential is just the covariant form of the entropy gradient flow \eqref{eq:gradient_flow_entropy} in the metric \eqref{eq:sinkhorn_metric}:
\begin{align}
    K_\pi\frac{\partial h}{\partial t} &= \frac{\partial \rho}{\partial t},\quad \frac{\partial h}{\partial t} \in T_{\rho}^*{\rm P}(Y),
\end{align}
which we note is also distinct from the metric relating the tangent and cotangent spaces in the original mirror formulation (Definition \ref{def:tangent_cotangent}).
The operator $L_\pi = (I-S_\pi)$
is the generator of a Markov jump process, giving the following physical interpretation of the Sinkhorn flow.

The objective $F$ \eqref{eq:objective} is a Gibbs free energy, since with $\nu = \exp(-V)$ \eqref{eq:densities} (absorbing normalizing constants into $V$), 
\begin{align}
    \label{eq:sinkhorn_fe}
    F(\rho) &= H(\rho|\nu) = \E_{\rho}[V] + H(\rho).
\end{align}
(Note that the temperature-like parameter $\varepsilon$ in the Schr\"odinger bridge energy \eqref{eq:static_sb} does not appear in \eqref{eq:sinkhorn_fe}, and is rather encoded in the initial condition of the flow \eqref{eq:sinkhorn}.)
The \textit{chemical potential}, its first variation (denoted here by $u$ to avoid ambiguity with the marginal constraint $\mu$), evolves by the linear time-inhomogeneous dynamics (from \eqref{eq:gradient_flow_entropy})
\begin{align}
    \label{eq:u_t_dynamics}
    \frac{\partial u}{\partial t} &= -L_\pi u,\quad u := \log\frac{d\rho}{d\nu} \equiv DF(\rho),
\end{align}
with equivalence $(\equiv)$ denoting equality up to additive constants, which as previously noted are in the kernel of $L_\pi$.
Note that by Definition \ref{def:S_pi}, the entropic gradient flow \eqref{eq:gradient_flow_entropy} is in fact the nonlocal diffusion
\begin{align}
    \label{eq:gradient_flow_entropy_2}
    \frac{\partial \rho}{\partial t}(t, y) = -(K_\pi u)(t, y) = \int_Y k_\pi(t, y, y')(u(t, y') - u(t, y))dy',
\end{align}
where $k_\pi$ is the time-inhomogeneous (via the coupling $\pi$) symmetric kernel 
\begin{align}
    \label{eq:kernel}
    k_\pi(t, y, y') &= \int_X\frac{\pi(t, x, y)\pi(t, x, y')}{\pi^X(t, x)}dx,\ \text{with } \int_Y k_\pi(t, y, y')dy' = \rho(t, y).
\end{align}
In this form \eqref{eq:gradient_flow_entropy_2}, it is clear that (nonlocal) differences in the chemical potential drive probability currents, as in linear irreversible thermodynamics \cite{groot_non-equilibrium_2013}.
The entropy production rate is then expressible from \eqref{eq:F_diss} as
\begin{align}
    \label{eq:epr_k}
    -\frac{d}{dt} H(\rho|\nu) &= \langle u, K_\pi u\rangle = \iint_{Y\times Y} u(t, y)k_\pi(t, y, y')(u(t, y) - u(t, y'))dydy'\\
    &= \frac12\int_{Y\times Y} k_\pi(t, y, y')(u(t, y') - u(t, y))^2dydy',
\end{align}
by symmetry of $k_\pi$ \eqref{eq:kernel}.
Hence, again as in linear irreversible thermodynamics (Ch. 4, \cite{groot_non-equilibrium_2013}), the entropy production rate is a quadratic form in the thermodynamic force $(u(t, y) - u(t, y'))$.
The kernel $k_\pi(t, y, y')$ is precisely the Onsager reciprocal relation (constitutive relation giving fluxes from forces) of the Sinkhorn flow, which is why we call $K_\pi$ the Onsager operator.

The Sinkhorn metric is the ``inverse'', in the sense of \eqref{eq:sinkhorn_metric}, of the nonlocal diffusion operator $K_\pi$ just as the Wasserstein metric tensor is the ``inverse'' of the Laplacian $(-\nabla \cdot \rho \nabla) : T_{\rho}{\rm P}^* \to T_{\rho}{\rm P}$.
In fact, \eqref{eq:sinkhorn_metric} is an instance of the \textit{nonlocal Wasserstein metric} proposed and studied in \cite{erbar_gradient_2014} and further developed in \cite{peletier_jump_2022} to identify gradient structures in jump processes.
Its formal inverse is $A_\rho$, where
\begin{align}
    \label{eq:nonlocal_wasserstein_metric}
    (A_\rho u)(y) &:= \int_Y \theta(\rho(y), \rho(y'))(u(y) - u(y'))J(y, dy').
\end{align}
Here, $J$ is the transition kernel of the underlying jump process on the space $Y$ and $\theta$ is a weight, often chosen as the logarithmic mean $\theta(s, t) = (s-t)/(\log s - \log t)$ so that the gradient flow of Shannon entropy ($u = \log \rho$) in this metric gives the standard nonlocal diffusion equation.
Our metric \eqref{eq:sinkhorn_metric} is a special case of \eqref{eq:nonlocal_wasserstein_metric} in which the jump kernel $J$ is time-dependent and the weight $\theta$ is $1$, so that the gradient flow of the Shannon entropy is instead a ($\pi_t$-weighted) nonlocal diffusion equation.

The interpretation of \eqref{eq:gradient_flow_entropy_2} as a weighted diffusion is perhaps most easily seen, as is often done for jump processes \cite{peletier_jump_2022}, by discretization to a complete graph.
Let $G$ be a finite undirected weighted graph $(V, E, k)$ consisting of vertices $y \in V$, edges $(y, y') \in E$, and edge weights $k : E \to \R_+$ (using $y, y'$ to denote vertices in analogy with the spatial coordinates in \eqref{eq:gradient_flow_entropy_2}).
Recall that the graph Laplacian $\Delta_{G}$ of a vertex function $f: V \to \R$ is 
\begin{align}
    \label{eq:graph_laplacian}
    (\Delta_{G} f)(y) &= \sum_{(y, y') \in E} k(y, y')(f(y') - f(y)).
\end{align}
Hence, for a given vertex set $V$, \eqref{eq:gradient_flow_entropy_2} discretizes to the complete graph $E = V^2$ (ignoring self-loops) with time-varying edge weights $k := k_\pi$ given by \eqref{eq:kernel} as 
\begin{align}
    \label{eq:graph_discretization}
    \frac{\partial \rho}{\partial t} &= \Delta_{G} u.
\end{align}
In the ``topology'' $E=V^2$, forces are indeed generated by gradients of the chemical potential $u$.
From this point of view, \eqref{eq:gradient_flow_entropy_2} bears a strong resemblance to classical models of all-to-all coupled oscillator networks \cite{ihara_continuum_2023}.

It is known that nonlocal diffusions share many properties with true (local) diffusions, with the notable exception of regularization of initial conditions \cite{chasseigne_asymptotic_2006}. 
We next derive several such properties for the entropic gradient flow \eqref{eq:gradient_flow_entropy_2} in the following section, with implications for the convergence rate of the Sinkhorn flow.
One immediate property is that\textemdash from a physical point of view\textemdash neither the discrete diffusion \eqref{eq:graph_discretization} nor the continuum nonlocal diffusion \eqref{eq:gradient_flow_entropy_2} can sustain differences in the chemical potential at stationary states (so-called \textit{non-equilibrium steady states}, in which the probability current can be nonzero yet divergence-free).
In the discrete setting, this is an immediate consequence of the spectrum of the (connected) graph Laplacian $\Delta_G$.
In the continuum setting, this follows from the entropy production identity \eqref{eq:epr_k}, which shows that at equilibrium $t_* \in [0, \infty]$, $u(t_*, y) = u(t_*, y')$ a.e. and the thermodynamic force vanishes.
A natural follow-up question is whether there is an exponential rate of contraction of the linear dynamics $u(t, y)$ \eqref{eq:u_t_dynamics} to this set of equilibria, the subspace of constant functions.

Answering this question suggests that the entropic gradient structure \eqref{eq:gradient_flow_entropy} offers more than philosophical or interpretive value over the natural gradient formulation of mirror flow \eqref{eq:mirror_natural_gradient} in the Fisher-Rao metric: the nonlocal diffusive nature of \eqref{eq:sinkhorn_metric} makes the Sinkhorn flow amenable to Bakry-\'Emery-type analysis as in the geometric theory of diffusion processes \cite{bakry_analysis_2014}.
Indeed, the dissipation metric \eqref{eq:F_diss} suggests the state-dependent norm:
\begin{align}
    \label{eq:u_t_contraction}
    \frac{d}{dt}\frac12\norm{u}_{L^2(\rho)}^2 &= - \langle u, L_\pi u\rangle_{\rho} - \frac12\langle u^2, L_\pi u\rangle_{\rho},
\end{align}
(up to the equilibrium set of constants, i.e. further restricting to the mean-zero subspace $L_0^2(\rho)$).
The similarity to diffusion processes, which we explore in depth in the next section, begins with the self-adjointness of $L_\pi$ in this inner product (reflected also in the symmetry of the kernel $k_\pi$ \eqref{eq:kernel}): the first term is recognizable as a Dirichlet-type functional, called the \textit{Dirichlet form} in diffusion \cite{bakry_analysis_2014}.
A Poincar\'e inequality for the Dirichlet form (corresponding to a spectral gap of $L_\pi$) implies exponential contraction in $L^2(\rho)$\textemdash as in diffusion\textemdash if one can control the second term, which arises from the time-inhomogeneity of the metric.

\subsection{Geometry of the Sinkhorn flow}
\label{sec:diffusion}

Having shown that the Sinkhorn flow has the out-of-equilibrium thermodynamic structure as a gradient flow of entropy \eqref{eq:gradient_flow_entropy} in the Riemannian metric defined by a nonlocal diffusion operator \eqref{eq:gradient_flow_entropy_2}, we now show that several key properties characterizing the geometry of (local) diffusion processes \cite{bakry_analysis_2014} are preserved in this entropic gradient flow.

\begin{definition}[Sinkhorn Dirichlet form]
\label{def:dirichlet_form}
In analogy with diffusion, let us define the Dirichlet form (\S 1.7.1, \cite{bakry_analysis_2014}) associated to the Onsager operator $K_\pi$ \eqref{eq:sinkhorn_metric} using an ``integration by parts'' formula
\begin{align}
    \label{eq:dirichlet}
    \mathcal{E}_\pi(f, g) &:= \langle f, K_\pi g\rangle = \langle f, L_\pi g\rangle_{\pi^Y},
\end{align}
which for equal arguments is just, using \eqref{eq:L_expansion},
\begin{align}
    \label{eq:dirichlet_2}
    \mathcal{E}_\pi(u, u) &= \langle u, (I - P_\pi Q_\pi)u\rangle_{\pi^Y} = \langle u, (I - Q_\pi)u\rangle_{\pi} = \norm{(I - Q_\pi)u}_{L^2(\pi)}^2,
\end{align}
the entropy decay rate of the Sinkhorn flow (Theorem \ref{thm:sinkhorn_epr}), taking $u = \log\frac{d\pi^Y}{d\nu}$.
\end{definition}

As in diffusion, we obtain an explicit bound for the entropy decay rate \eqref{eq:sinkhorn_epr} via the ``Poincar\'e constant,'' or spectral gap, of the Dirichlet form $\mathcal{E}_\pi$. 

\begin{lemma}[Poincar\'e inequality for $\mathcal{E}$]
\label{lem:poincare}
For all $u \in L^2(\pi^Y)$, we have
\begin{align}
    \label{eq:poincare_E}
    \mathcal{E}_\pi(u, u) &\ge (1 - C(\pi))\norm{u - \E_{\pi^Y}[u]}_{L^2(\pi^Y)}^2 = (1 - C(\pi))\Var_{\pi^Y}(u)
\end{align}
for some constant $C(\pi) \in [0, 1]$ depending only on $\pi$.
\end{lemma}
\begin{proof}
Let $u \in L^2(\pi^Y)$ and $u = \Tilde{u} + \E_{\pi^Y}[u]\bm{1}$ with $\Tilde{u} \in L^2_0(\pi^Y)$ where $L^2_0(\pi^Y)$ is the mean-zero subspace defined in \eqref{eq:mean_zero}.
Then, since $L_\pi$ is self-adjoint on $L^2(\pi^Y)$ and $S_\pi \bm{1} = \bm{1}$ hence $L_\pi\bm{1} = 0$,
\begin{align}
    \mathcal{E}_\pi(u, u) &= \langle u, L_\pi u\rangle_{\pi^Y} = \langle \Tilde{u}, L_\pi \Tilde{u}\rangle_{\pi^Y} = \norm{\Tilde{u}}_{L^2(\pi^Y)}^2 - \langle \Tilde{u}, S_\pi \Tilde{u}\rangle_{\pi^Y}.
\end{align}
Moreover, $L_0^2(\pi^Y)$ is an invariant subspace of $S_\pi$, since
\begin{align}
    \E_{\pi^Y}[S_\pi \Tilde{u}] &= \langle \Tilde{u}, S_\pi\bm{1}\rangle_{\pi^Y} = \E_{\pi^Y}[\Tilde{u}] = 0.
\end{align}
Therefore, denote the restriction to $L_0^2(\pi^Y)$ by $S_\pi^0$, which remains self-adjoint. 
Letting $E := L_0^2(\pi^Y)$ be the mean-zero subspace, 
\begin{align}
    \label{eq:T0_norm}
    \norm{S_\pi^0} &= \sup W_{E}(S_\pi^0) = \sup_{f \in E\setminus\{0\}}\frac{\langle f, S_\pi^0 f\rangle_{\pi^Y}}{\norm{f}_{L^2(\pi^Y)}^2} = \sup_{f\in E\setminus\{0\}}\frac{\norm{Q_\pi f}_{L^2(\pi)}^2}{\norm{f}_{L^2(\pi)}^2} =: C(\pi).
\end{align}
Since $Q_\pi$ is an orthogonal projection, $C(\pi) \le 1$.
Thus,
\begin{align}
    \mathcal{E}_\pi(u, u) &\ge (1 - C(\pi)) \norm{\Tilde{u}}_{L^2(\pi^Y)}^2,
\end{align}
which gives the result.
\end{proof}

\begin{corollary}
\label{cor:sinkhorn_epr_poincare}
Along the Sinkhorn dynamics \eqref{eq:sinkhorn_dual}, \eqref{eq:sinkhorn_primal} with same regularity as in Theorem \ref{thm:sinkhorn_epr},
\begin{align}
    \label{eq:sinkhorn_epr_poincare}
    \frac{d}{dt} H(\pi^Y|\nu) & \le -(1 - C(\pi))\Var_{\pi^Y}(\log\frac{d\pi^Y}{d\nu})
\end{align}
with $C(\pi_t)$ the Poincar\'e constant in Lemma \ref{lem:poincare} along the flow $\pi_t$.
Furthermore, the dissipation rate $\frac{d}{dt}H(\pi^Y | \nu) < 0$ is nonzero if $\pi_t^Y \ne \nu$ a.e. and the regularization constant $\varepsilon > 0$ is positive.
\end{corollary}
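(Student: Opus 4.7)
The plan is to combine the entropy production identity of Theorem \ref{thm:sinkhorn_epr} with the Poincar\'e inequality of Lemma \ref{lem:poincare} for the main bound, then to argue that $\varepsilon > 0$ forces $\pi_t > 0$ a.e., which rules out $C(\pi_t) = 1$.

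For the displayed inequality, the first step is to substitute $g = \log \frac{d\pi_t^Y}{d\nu}$ into Lemma \ref{lem:poincare}. The Dirichlet-form identity \eqref{eq:dirichlet_2} gives $\mathcal{E}_{\pi_t}(g, g) = \norm{(I - Q_{\pi_t})g}_{L^2(\pi_t)}^2$, which by Theorem \ref{thm:sinkhorn_epr} is exactly $-\frac{d}{dt}H(\pi_t^Y|\nu)$; the right-hand side of Lemma \ref{lem:poincare} is $(1 - C(\pi_t))\Var_{\pi_t^Y}(g)$. Combining the two yields the stated bound.

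For the strict bound $C(\pi_t) < 1$ when $\varepsilon > 0$, the plan is to use the characterization $C(\pi) = \norm{T_\pi^0} = \sup_{f\in L_0^2(\pi^Y), \norm{f} = 1} \norm{Q_\pi f}_{L^2(\pi)}^2$ from \eqref{eq:T0_norm}. The crucial input from $\varepsilon > 0$ is that, under the assumptions of \S\ref{sec:assumptions} ($\mu, \nu > 0$ a.e.), the initial coupling $d\pi_0 \propto e^{-c/\varepsilon}d(\mu \otimes \nu)$ is strictly positive, and so are all iterates $\pi_t$ via the primal form \eqref{eq:sinkhorn_primal}. The law of total variance then gives, for $f \in L_0^2(\pi_t^Y)$ with $\norm{f}_{L^2(\pi_t^Y)} = 1$,
\begin{align*}
\norm{Q_{\pi_t} f}_{L^2(\pi_t)}^2 = 1 - \E_{\pi_t^X}\left[\Var_{\pi_t}(f(Y)\,|\,X)\right],
\end{align*}
so the extremal case $\norm{Q_{\pi_t}f}^2 = 1$ would force $\Var_{\pi_t}(f(Y)|X) = 0$ for $\pi_t^X$-a.e. $x$, i.e., $f(y) = \tilde f(x)$ $\pi_t$-a.e. for some $\tilde f$. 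Strict positivity $\pi_t > 0$ Lebesgue-a.e. promotes this to Lebesgue-a.e. on $X \times Y$, which is possible only if $f$ is constant; the mean-zero constraint then gives $f \equiv 0$, a contradiction.

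The hard part will be promoting this rigidity for individual $f$ into the strict operator-norm bound $\norm{T_{\pi_t}^0} < 1$: in general the supremum in $C(\pi_t)$ need not be attained, so excluding extremizers is not quite enough---one must show $1 \notin \sigma(T_{\pi_t}^0)$. The natural route is to verify that $T_{\pi_t} = P_{\pi_t}Q_{\pi_t}$ is compact (in fact Hilbert--Schmidt) on $L^2(\pi_t^Y)$, using the explicit integral kernel $\tilde K_{\pi_t}(y, y') = \frac{1}{\pi_t^Y(y)\pi_t^Y(y')}\int_X \frac{\pi_t(x, y)\pi_t(x, y')}{\mu(x)}dx$ and the decay of $e^{-c/\varepsilon}$ supplied by $\varepsilon > 0$; compactness then makes the supremum a maximum attained at some eigenfunction, at which point the rigidity argument above produces the desired contradiction. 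Verifying the Hilbert--Schmidt bound in full generality on $\R^d$ without further assumptions on $c$ is the technical heart of the claim.
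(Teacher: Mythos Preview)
Your proof matches the paper's essentially step for step: the inequality follows by combining Theorem~\ref{thm:sinkhorn_epr} with Lemma~\ref{lem:poincare} at $g=\log\frac{d\pi_t^Y}{d\nu}$, and the strictness $C(\pi_t)<1$ is deduced via the same rigidity argument (if $\norm{Q_{\pi_t}f_*}_{L^2(\pi_t)}=\norm{f_*}_{L^2(\pi_t)}$ for some $f_*\perp\bm{1}$, then $f_*(y)$ coincides $\pi_t$-a.e.\ with an $X$-measurable function, hence is constant since $\pi_t>0$ Lebesgue-a.e., hence is zero). On one point you are in fact more careful than the paper: the paper simply asserts that $C(\pi)=1$ yields an extremizer $f_*\perp\bm{1}$ attaining the supremum in \eqref{eq:T0_norm}, without any compactness or spectral-gap justification, so your proposed Hilbert--Schmidt step would close a gap the paper leaves open rather than deviate from its argument.
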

\begin{proof}
The first statement \eqref{eq:sinkhorn_epr_poincare} follows immediately from Theorem \ref{thm:sinkhorn_epr} and Lemma \ref{lem:poincare}.
For the second, suppose that at some time $t$ along the flow $\frac{d}{dt} H(\pi_t^Y|\nu) = 0$ (dropping the $t$ subscripts henceforth).
Then, with $u = \log\frac{d\pi^Y}{d\nu}$ 
\begin{align}
    0 &= -\frac{d}{dt}H(\pi^Y|\nu) = \mathcal{E}_\pi(u, u) = \norm{(I - Q_\pi)u}_{L^2(\pi)}^2
\end{align}
since $Q_\pi$ is an orthogonal projection of $L^2(\pi)$.
This holds iff $u = Q_\pi u$ $\pi$-a.e.
In other words, the $X$-measurable function $\ell(x) := (Q_\pi u)(x)$ is such that $u(y) = \ell(x)$ for $\pi$-a.e. $(x, y)$.

Since $\pi_0 = \mu\otimes \nu > 0$ a.e. and $\varepsilon > 0$ by hypothesis, $\pi_t > 0$ a.e. along the flow \eqref{eq:sinkhorn_primal}.
Hence, $u$ must be constant a.e. on $Y$; since $u = \log\frac{d\pi^Y}{d\nu}$, we have $\pi^Y = \nu$ $Y$-a.e.
\end{proof}

In Corollary \ref{cor:sinkhorn_epr_poincare}, the Poincar\'e inequality (Lemma \ref{lem:poincare}) gives a quantitative bound on the dissipation rate, which can still be zero. 
With positive entropy regularization $\varepsilon > 0$, however, we obtain that the dissipation rate is strictly positive along the flow. 
While the latter is non-quantitative, we can make steps towards a uniform-in-time bound on the Poincar\'e constant $(1 - C(\pi))$ by decoupling the parameter $\pi$ and the argument $u$ of the Dirichlet form $\mathcal{E}_\pi(u, u)$ \eqref{eq:dirichlet}, which are normally related by $u = \log\frac{d\pi^Y}{d\nu}$ along the flow.
Estimating $\mathcal{E}_\pi$ is then possible by assuming a square-integrability of the kernel $k_\pi$ \eqref{eq:kernel}.

\begin{lemma}[Nonzero Poincar\'e constant]
\label{lem:nonzero_poincare}
For arbitrary $u \in L^2(\pi^Y)$, the Poincar\'e constant $(1 - C(\pi)) > 0$ if $\pi > 0$ a.e. and 
\begin{align}
    \label{eq:k_integrable}
    \iint_{Y\times Y}\frac{k_\pi(y, y')^2}{\pi^Y(y)\pi^Y(y')}dydy' &< \infty 
\end{align}
with the kernel $k_\pi$ defined as in \eqref{eq:kernel}.
\end{lemma}
\begin{proof}
Recall from the proof of Lemma \ref{lem:poincare} that $C(\pi) = \norm{S_\pi^0}_{L_0^2(\pi^Y)}$, where $S_\pi^0$ is the restriction of $S_\pi$ to the mean-zero subspace $L_0^2(\pi^Y)$.
Next, note that for a function $u \in L^2(\pi^Y)$, we have by definition of the kernel \eqref{eq:kernel} that 
\begin{align}
    (S_\pi u)(y) &= \int_Y \Tilde{k}_\pi(y, y')u(y')\pi^Y(y'),\quad \Tilde{k}_\pi(y, y') := \frac{k_\pi(y, y')}{\pi^Y(y)\pi^Y(y')}.
\end{align}
Hence, by the integrability condition \eqref{eq:k_integrable},
\begin{align}
    \iint_{Y\times Y}|\Tilde{k}_\pi(y, y')|^2\pi^Y(y)\pi^Y(y')dydy' &< \infty,
\end{align}
so $S_\pi$ is a Hilbert-Schmidt operator on $L^2(\pi^Y)$ and is hence compact, as is the restriction $S_\pi^0$.

Thus, if $C(\pi) = \norm{S_\pi^0}_{L_0^2(\pi^Y)} = 1$, the operator norm is attained for some $f_* \in L_0^2(\pi^Y)\setminus \{0\}$, i.e. from \eqref{eq:T0_norm}
\begin{align}
    \norm{Q_\pi f_*}_{L^2(\pi)}^2 = \norm{f_*}_{L^2(\pi^Y)}^2 = \norm{f_*}_{L^2(\pi)}^2 \iff \norm{(I - Q_\pi)f_*}_{L^2(\pi)}^2 = 0.
\end{align}
Then by an identical argument as in Corollary \ref{cor:sinkhorn_epr_poincare}, since $\pi > 0$ a.e., $f_*$ must be a.e. constant on $Y$, yet constant functions in $L_0^2(\pi^Y)$ are necessarily zero, which is a contradiction.
This gives the result.
\end{proof}

With Lemma \ref{lem:poincare} in hand, a second nonnegativity property \eqref{eq:u_t_contraction} of the Dirichlet form \eqref{eq:dirichlet} (as we will see later, essentially a third moment bound) then implies exponential contraction of the chemical potential $u$ to the subspace of constant functions, as promised in \S \ref{sec:thermodynamics}.

\begin{corollary}
\label{cor:u_contraction}
The Sinkhorn flow, expressed in dynamics of the chemical potential $u$ \eqref{eq:u_t_dynamics}, is contracting with time-dependent rate equal to the Poincar\'e constant \eqref{eq:poincare_E} in the mean-zero subspace $L_0^2(\pi^Y)$ if 
\begin{align}
    \label{eq:contraction_condition}
    \mathcal{E}_{\pi}(\Tilde{u}^2, \Tilde{u}) &\ge 0\quad \forall \pi \in \Pi(\mu, \cdot),
\end{align}
where $u = \log\frac{d\pi^Y}{d\nu}$ is the chemical potential and $\Tilde{u} = u - \E_{\pi^Y}[u]$ its mean-zero part.
\end{corollary}
\begin{proof}
For brevity, let $R_\pi : L^2(\pi^Y) \to L^2_0(\pi^Y)$ be the orthogonal projection 
\begin{align}
    \label{eq:R_pi}
    R_\pi u &:= u - \E_{\pi^Y}[u]\bm{1},
\end{align}
from which it is clear that $R_\pi$ is self-adjoint on $L^2(\pi^Y)$.
Now, let $u(t, \cdot)$ be a particular solution of the chemical potential dynamics \eqref{eq:u_t_dynamics}, and $R_{\pi(t, \cdot, \cdot)}$ the corresponding sequence of projections.
In the time-varying norm defined by $R_\pi$, 
\begin{align}
    \frac{d}{dt}\frac12\norm{R_\pi u}_{\pi^Y}^2 &= \langle R_\pi u, \frac{\partial u}{\partial t}\rangle_{\pi^Y} - \frac{d}{dt}[\E_{\pi^Y}[u]]\langle R_\pi u, \bm{1}\rangle_{\pi^Y} + \frac12\langle R_\pi u, \frac{\partial \pi^Y}{\partial t}R_\pi u\rangle.
\end{align}
The middle term vanishes by \eqref{eq:R_pi}.
Then, since $L_\pi \bm{1} = 0$ and hence $L_\pi = L_\pi R_\pi$,
\begin{align}
     &= -\langle R_\pi u, L_\pi u\rangle_{\pi^Y} - \frac12\langle (R_\pi u)^2, L_\pi u\rangle_{\pi^Y}\\
    &= -\mathcal{E}_{\pi}(R_\pi u, R_\pi u) - \frac12 \mathcal{E}_{\pi}((R_\pi u)^2, R_\pi u).
\end{align}
Then by the Poincar\'e inequality (Lemma~\ref{lem:poincare}) and hypothesis \eqref{eq:contraction_condition},
\begin{align}
    &\le -\lambda_\pi \norm{R_\pi u}_{\pi^Y}^2,
\end{align}
with $\lambda_\pi := 1 - C(\pi)$.
Since $u(t, \cdot)$ is a linear time-varying dynamics \eqref{eq:u_t_dynamics}, this implies that it is contracting in $L_0^2(\pi^Y)$ with rate $\lambda_\pi$.
\end{proof}

Having examined criteria for exponential contraction in various spaces, we finally give a sharp condition for uniform exponential convergence in entropy.
Notice that $\mathcal{E}_\pi(u, u)$ in \eqref{eq:dirichlet_2} with argument $u = \log\frac{d\pi^Y}{d\nu}$ is precisely the  \textit{Fisher information} \cite{amari_information_2016} functional (by which we mean not the usual information-theoretic quantity, but rather a structurally identical one with the Laplacian $-\nabla \cdot \pi \nabla$ replaced with the nonlocal diffusion operator $K_\pi$ \eqref{eq:gradient_flow_entropy_2}):
\begin{align}
    \label{eq:fisher}
    I_\pi(\omega|\nu) &:= \mathcal{E}_\pi(\log\frac{d\omega}{d\nu},\log\frac{d\omega}{d\nu}).
\end{align}
Hence, Theorem \ref{thm:sinkhorn_epr} is exactly de Bruijn's identity\textemdash familiar from diffusion processes (Proposition 5.2.2, \cite{bakry_analysis_2014})\textemdash for the Dirichlet form associated to the Sinkhorn flow:
\begin{align}
    \label{eq:de_bruijn}
    \frac{d}{dt}H(\pi^Y|\nu) = -I_{\pi}(\pi^Y|\nu).
\end{align}
It follows that the entropy decay \eqref{eq:sinkhorn_epr} is uniformly exponential if and only if a log-Sobolev inequality (in the general sense for Markov triples, \S 5.2.1 \cite{bakry_analysis_2014}\textemdash defined by the Dirichlet form \eqref{eq:dirichlet} rather than a true Sobolev inner product) 
holds:
\begin{definition}[Logarithmic Sobolev inequality]
\label{def:lsi}
A pair $(\pi, \nu)$ is said to satisfy a \textit{log-Sobolev inequality} (in the sense of \eqref{eq:fisher}, \eqref{eq:dirichlet}) with constant $\lambda > 0$ if
\begin{align}
    \label{eq:lsi}
    H(\pi^Y|\nu) &\le \frac{1}{2\lambda}I_\pi(\pi^Y|\nu).
\end{align}
\end{definition}
\begin{corollary}[Exponential entropy decay in the Sinkhorn flow]
\label{cor:lsi_epr}
If for given $\mu \in {\rm P}(X), \nu \in {\rm P}(Y), \lambda > 0$ and all $\pi_t$ along the Sinkhorn flow \eqref{eq:sinkhorn_dual}, \eqref{eq:sinkhorn_primal}, the pair $(\pi_t, \nu)$ satisfies the log-Sobolev inequality (Definition \ref{def:lsi}) uniformly with rate $\lambda$, then
\begin{align}
    H(\pi^Y_t|\nu) &\le e^{-2\lambda t}H(\pi_0^Y|\nu).
\end{align}
\end{corollary}
\begin{proof}
This follows immediately from the de Bruijn identity \eqref{eq:de_bruijn} and \eqref{eq:lsi}.
\end{proof}

Finally, to complete the analogy with diffusion processes, we provide a sufficient condition for the LSI which mirrors the Bakry-\'Emery theorem (see \S 5.7, \cite{bakry_analysis_2014}).
We begin by noting that the Dirichlet form \eqref{eq:dirichlet} and hence the Fisher information \eqref{eq:fisher} are expressible as integrals of carr\'e du champ \cite{bakry_diffusions_1985} operators.

\begin{definition}[Sinkhorn carr\'e du champ and iterated c.d.c.]
\label{def:cdc}
Let us define the carr\'e du champ operator $\Gamma$ (\S 1.4.2, \cite{bakry_analysis_2014}, distinct from the pushforward of curves denoted by $\Gamma$ in Theorem \ref{thm:cont_fr}) associated to the generator $-L_\pi$ defining the Sinkhorn marginal flow \eqref{eq:gradient_flow_entropy},
\begin{align}
    \label{eq:gamma}
    \Gamma(f, g) &= \frac12[-L_\pi(fg) + fL_\pi g + gL_\pi f],\quad \Gamma(f) := \Gamma(f, f),
\end{align}
dropping the subscript $\pi$ from $\Gamma$ for brevity.
Similarly, we define the iterated carr\'e du champ (\S 1.16, \cite{bakry_analysis_2014}) associated to $-L_\pi$,
\begin{align}
    \label{eq:gamma_2}
    \Gamma_2(f, g) &= \frac12[-L_\pi \Gamma(f, g) + \Gamma(f, L_\pi g) + \Gamma(L_\pi f, g)],\quad \Gamma_2(f) := \Gamma_2(f, f).
\end{align}
Since $L_\pi$ is self-adjoint on $L^2(\pi^Y)$ and $L_\pi \bm{1} = 0$,
\begin{align}
    \label{eq:gamma_dirichlet}
    \int_Y\Gamma(f, g)d\pi^Y &= \int_Y(fL_\pi g)d\pi^Y = \mathcal{E}_\pi(f, g)
\end{align}
which is just the Dirichlet form (Definition \ref{def:dirichlet_form}), verifying the ``integration by parts'' formula for $L_\pi$ (\S 1.7.1, \cite{bakry_analysis_2014}).
Similarly, 
\begin{align}
    \label{eq:gamma_2_int}
    \int_Y\Gamma_2(f, g)d\pi^Y &= \frac12\int_Y (\Gamma(f, L_\pi g) + \Gamma(L_\pi f, g))d\pi^Y = \int_Y (L_\pi f)(L_\pi g)d\pi^Y,
\end{align}
as in diffusion.
\end{definition}

Finally, we have the following ``curvature condition'' (borrowing terminology from diffusion, \S 5.7 \cite{bakry_analysis_2014}) under which the time-varying Dirichlet form \eqref{eq:dirichlet} satisfies a uniform LSI \ref{def:lsi}.

\begin{theorem}[$\Gamma$-$\Gamma_2$ condition for Sinkhorn LSI]
\label{thm:dynamic_curvature}
If, along the Sinkhorn flow \eqref{eq:gradient_flow_entropy}, 
\begin{align}
    \label{eq:dynamic_curvature}
    \Gamma_2(u) \ge \lambda \Gamma(u)\quad \text{where } u = \log\frac{d\pi^Y}{d\nu}
\end{align}
$t$-uniformly for some $\lambda > 0$, and the third moment
\begin{align}
    \label{eq:dynamic_curvature_lb}
    \E_\pi[((I - Q_\pi)u)^3] &\ge 0
\end{align}
is nonnegative, then the Sinkhorn LSI \eqref{eq:lsi} holds along the flow.
\end{theorem}
\begin{proof}
We proceed essentially as in diffusion (5.7.4, \cite{bakry_analysis_2014}).
First, note that $d\pi^Y = e^{u}d\nu$ from \eqref{eq:dynamic_curvature}. 
Hence, from \eqref{eq:gamma_dirichlet}, the Fisher information functional \eqref{eq:fisher} associated to $L_\pi$ is expressible as
\begin{align}
    \label{eq:gamma_fisher}
    I_{\pi}(\pi^Y|\nu) &= \mathcal{E}_\pi(u, u) = \int_Y e^{u}\Gamma(u)d\nu.
\end{align}
Differentiating \eqref{eq:gamma_fisher} in time using \eqref{eq:u_t_dynamics} and \eqref{eq:gamma_2_int}, we have
\begin{align}
    \frac{d}{dt}I_{\pi}(\pi^Y|\nu) &= \int_Ye^u[-(L_{\pi}u)\Gamma(u) + 2\Gamma(u, \frac{\partial u}{\partial t}) + \left(\frac{\partial \Gamma}{\partial t}\right)u]\,d\nu\\
    \label{eq:dI_dt}
    &= -\int_Y e^{u}[(L_{\pi}u)\Gamma(u) + 2\Gamma_2(u) + \frac12 \frac{\partial L_\pi}{\partial t}(u^2) - u\frac{\partial L_{\pi}}{\partial t}u]\,d\nu.
\end{align}
Expanding the first term in \eqref{eq:dI_dt} (dropping the subscripts $\pi$ from operators), we obtain
\begin{align}
    \langle Lu, \Gamma(u)\rangle_{\pi^Y} 
&=\frac12\left[\E_{\pi^Y}[u^3] - \langle Su, 2u^2 - 2uSu + Su^2\rangle_{\pi^Y}\right]
\end{align}
Let us abbreviate $\dot{L} := \partial L/\partial t$ (and similarly $\dot{P}$ \eqref{eq:P_pi} and $\dot{Q}$ \eqref{eq:Q_pi}).
Recalling from \eqref{eq:dlogpi_dt} that $\dot{\pi} = -\pi(I-Q)u$, we have, for $f$ which is $X$-measurable and $g$ which is $Y$-measurable, 
\begin{align}
    \dot{P} f &= (Lu)(Pf) + P(fQu) - P(fu) = -(Su)(Pf) + P(fQu)\\
    \dot{Q} g &= Q(gQu) - Q(gu) = (Qu)(Qg) - Q(gu),
\end{align}
recalling that $P, Q$ commute with multiplication by functions which are $Y, X$-measurable respectively (see earlier \eqref{eq:Q_dot} for a similar identity).
Hence, 
\begin{align}
    \dot{L}u &= -(\dot{P}Q + P\dot{Q})u = (Su)^2 - 2P((Qu)^2) + Su^2\\
    \dot{L}(u^2) &= (Su)(Su^2) - 2P((Qu)(Qu^2)) + Su^3.
\end{align}
Thus, the entirety of the non-$\Gamma_2$ contribution in \eqref{eq:dI_dt} reads, after cancellations using self-adjointness of $S$ on $L^2(\pi^Y)$ and $S\bm{1} = \bm{1}$,
\begin{align}
    &\langle Lu, \Gamma(u)\rangle_{\pi^Y} + \frac12 E_{\pi^Y}[\dot{L}u^2] - \langle u, \dot{L}u\rangle_{\pi^Y}\\
=&\, \E_{\pi^Y}[u^3 - P((Qu)(Qu^2))] + 2\langle u, P((Qu)^2) - Su^2\rangle_{\pi^Y}.
\end{align}
By definition of $P$,
\begin{align}
    &= \E_\pi[u^3 - (Qu)(Qu^2) + 2u(Qu)^2 - 2u(Qu^2)]
\end{align}
and conditioning inside on $X$, using that $Q$ commutes with multiplication by $X$-measurable functions,
\begin{align}
    &= \E_\pi[u^3 - 3(Qu)(Qu^2) + 2(Qu)^3]\\
    \label{eq:third_moment}
    &= \E_\pi[((I - Q)u)^3],
\end{align}
which is the third moment \eqref{eq:dynamic_curvature_lb}.
Hence, from \eqref{eq:dI_dt} we have the property that the second moment evolves by the third moment plus a $\Gamma_2$ contribution,
\begin{align}
    \label{eq:moment_evolution}
    \frac{d}{dt}I_\pi(\pi^Y|\nu) &= \frac{d}{dt}\E_\pi[((I - Q_\pi)u)^2] = -\E_\pi[((I - Q_\pi)u)^3] -2\int_Y e^u\Gamma_2(u)d\nu.
\end{align}
Then in particular, the hypotheses \eqref{eq:dynamic_curvature}, \eqref{eq:dynamic_curvature_lb} and the Fisher identity \eqref{eq:gamma_fisher} imply
\begin{align}
    \frac{d}{dt}I_{\pi}(\pi^Y|\nu) &\le -2\lambda \int_Y e^u \Gamma(u)d\nu = -2\lambda I_{\pi}(\pi^Y|\nu),
\end{align}
thus the entropy dissipation rate is itself decaying exponentially with rate $2\lambda$ (as in diffusion \cite{markowich_trend_nodate}). 
Integrating de Bruijn's identity \eqref{eq:de_bruijn} in time, we have
\begin{align}
    H(\pi_t^Y|\nu) &= \int_t^\infty I_{\pi_s}(\pi_s^Y|\nu)ds \le I_{\pi_t}(\pi_t^Y|\nu)\int_0^\infty e^{-2\lambda s}ds = \frac{1}{2\lambda}I_{\pi_t}(\pi_t^Y|\nu),
\end{align}
which gives the result.
\end{proof}

\begin{remark}
\label{rem:moments}
The identity \eqref{eq:moment_evolution} suggests a possible ``moment hierarchy'': the first moment 
\begin{align}
    \E_\pi[(I - Q_\pi)u] &= \E_\pi[u] - \langle Q_\pi\bm{1}, u\rangle_\pi = 0
\end{align}
is fixed, and the evolution of the second has a contribution from the third.
Furthermore, the contraction condition \eqref{eq:contraction_condition} for the chemical potential dynamics in Corollary \ref{cor:u_contraction} is closely related to the bound \eqref{eq:dynamic_curvature_lb}; letting $v = (I - Q_\pi)u = (I - Q_\pi)\Tilde{u}$ (with $\Tilde{u} = u - \E_{\pi^Y}[u]$ the mean-zero part as in \eqref{eq:contraction_condition}),
\begin{align}
    \mathcal{E}_\pi(\Tilde{u}^2, \Tilde{u}) &=\E_\pi[\Tilde{u}^2v]
= \E_\pi[v^3] + 2\E_\pi[v^2Q_\pi\Tilde{u}]
\end{align}
which is just the third moment \eqref{eq:third_moment} plus a correction $2\E_\pi[((I - Q_\pi)\Tilde{u})^2Q_\pi \Tilde{u}]$.
\end{remark}

Let us conclude this section by noting that, in the case of true (local) diffusions, the $\Gamma-\Gamma_2$ relationship \eqref{eq:dynamic_curvature} is a differential condition called the \textit{curvature-dimension condition} or ${\rm CD}(\lambda, \infty)$ (with the second parameter playing a role not discussed here, see 3.3.14, \cite{bakry_analysis_2014}).
More specifically, when $\Gamma^A, \Gamma_2^A$ are the carr\'e du champ operators defined as in \ref{def:cdc} but instead for the drift-diffusion generator $A = \Delta - \nabla U \cdot \nabla$ for some potential $U$ (working on flat $\R^d$), ${\rm CD}(\lambda, \infty)$ becomes (\S 1.16, \cite{bakry_analysis_2014})
\begin{align}
    \Gamma_2^A(f) &\ge \lambda \Gamma^A(f) \iff \magn{\nabla\nabla f}^2 + \langle \nabla f,\nabla\nabla U \nabla  f\rangle \ge \lambda \magn{\nabla f}^2
\end{align}
which is satisfied by the strict convexity $\nabla\nabla U \ge \lambda I$, hence the term ``curvature condition.''
This is the starting point of the Bakry-\'Emery theorem (\S5, \cite{markowich_trend_nodate}) which shows that this implies the (classical) logarithmic Sobolev inequality for the measure $e^{-U}$.

On the other hand, here \eqref{eq:dynamic_curvature} is intrinsically an integral condition, reflecting the behavior of conditional expectation operators that ultimately arise from the metric \eqref{eq:gradient_flow_entropy_2} defining the entropic gradient flow.
Remarkably, however, up to control of time-inhomogeneous terms arising in various guises \eqref{eq:contraction_condition}, \eqref{eq:dynamic_curvature_lb} (which, in light of Remark \ref{rem:moments}, are all essentially cubic moments), many of the key geometric features governing diffusions\textemdash from the de Bruijn identity \eqref{eq:de_bruijn} to the curvature-like condition \eqref{eq:dynamic_curvature}\textemdash are preserved in the nonlocal diffusive structure \eqref{eq:gradient_flow_entropy_2} of the Sinkhorn flow's free-marginal dynamics.
By studying the Dirichlet form \eqref{eq:dirichlet} in its own right, without coupling the parameter $\pi$ and argument $u$ along the flow (as done in Lemma \ref{lem:nonzero_poincare}), we may understand these properties as more generic features of nonlocal diffusion generators which are preserved from the local case \cite{chasseigne_asymptotic_2006}.

\section{Applications}

We now give for illustration two straightforward computational use-cases in which the Sinkhorn log-Sobolev constant $\lambda$ is known or can be estimated \textit{a priori}.
This may be estimated in the discrete case by direct computation of \eqref{eq:lsi} over a binned feasible set, or via Theorem \ref{thm:dynamic_curvature}, which essentially reduces one required integration.

\begin{example}[Latent space design for generative models]
In generative models trained using ${\rm OT}_\varepsilon$-type losses (e.g. the Sinkhorn divergence, \cite{genevay_learning_2018}), the choice of the latent space $\phi(Y)$ (which $Y$ the space of the data distribution or the denoising target) for training can be guided by the LSI constant \eqref{eq:lsi}, e.g. by additional loss terms, of the pushforward data marginal $\phi_\#\nu$, since larger (uniform) LSI constants yield faster exponential convergence of inner Sinkhorn solves.
Similarly, in generative models based upon the Schr\"odinger bridge (e.g. \cite{de_bortoli_diffusion_2021}), a larger positive LSI constant in the latent space $\phi(Y)$ can improve training stability and convergence rates.
This could be achieved by differentiating a suitably regular surrogate for the ratio \eqref{eq:lsi} or \eqref{eq:dynamic_curvature} with respect to parameters of the latent space mapping for a given data distribution and adding an appropriate loss term.
\end{example}

\begin{example}[Adaptive stopping]
Practical uses of the Sinkhorn algorithm often use a fixed number $L$ of iterations; we illustrate how \textit{a priori} bounds for the entropy decay rate can be used to set $L$.  
While the identity \eqref{eq:sinkhorn_epr} holds for the continuous-time Sinkhorn flow, the entropy drop across the $\gamma$-Sinkhorn iteration \eqref{eq:gamma_sinkhorn_md} is first-order consistent with \eqref{eq:sinkhorn_epr}:
\begin{align}
    \frac{H(\pi_{t+1}^Y|\nu) - H(\pi_{t}^Y|\nu)}{\gamma} &= \frac{d}{dt}H(\pi_t^Y|\nu) + O(\gamma)
\end{align}
for step size $\gamma > 0$ (where $t \in \N$ correspond to full steps of the Sinkhorn algorithm \eqref{eq:sinkhorn}).
If one has a uniform LSI \ref{def:lsi} of rate $\lambda$ for the marginal $\nu$ (say, in a well-designed latent space), then 
\begin{align}
    H(\pi_{t+k}^Y|\nu)  &\le e^{-2\lambda \gamma k}H(\pi_{t}^Y|\nu) + O(k\gamma^2)
\end{align}
(which can also be adapted for variable step-sizes).
Hence for given tolerance $q > 0$ and error $H_0$ measured at some $t$, one can plan for
\begin{align}
    \label{eq:n_iterates}
    k &\ge \left\lceil\frac{1}{2\lambda\gamma}\log\frac{H_0}{q}\right\rceil
\end{align}
further iterates, at which point $H_0$ can be re-measured and checked for within tolerance, else the iteration re-started with a new estimate for $k$.
We note that as classical Sinkhorn \eqref{eq:md_sinkhorn} corresponds to $\gamma =1$, \eqref{eq:n_iterates} is merely a heuristic to avoid computing $H$ on every step, providing a valid estimate when using the $\gamma$-Sinkhorn iteration \eqref{eq:gamma_sinkhorn_md} with $\gamma \ll 1$.
Similarly, in Sinkformers \cite{sander_sinkformers_2022}, normalization of the attention kernel in a transformer architecture is generalized from a single softmax step to an iterative bi-normalization using some fixed number of Sinkhorn iterations.
This is a key hyperparameter of the model; in \cite{sander_sinkformers_2022}, it is noted that the number of iterations can influence the model accuracy.
Using an adaptively estimated number of $\gamma$-Sinkhorn iterations as in \eqref{eq:n_iterates} can account for the kernel's varying geometry both over time and across attention heads.
\end{example}

\section{Discussion}
Each full step of the Sinkhorn algorithm \eqref{eq:sinkhorn} 
admits the following interpretation in terms of the original Schr\"odinger bridge problem \eqref{eq:schr_question}: with the left marginal or initial observation of the bridge $\mu$ held fixed along the mirror descent \eqref{eq:sinkhorn_primal}, the right marginal $\nu_t := \pi_t^Y$ relaxes to the imposed terminal observation or constraint $\nu$.
By the Poincar\'e inequality (Corollary \ref{cor:sinkhorn_epr_poincare}) and Pinsker's inequality (e.g. Lemma 1.2 \cite{nutz_introduction_nodate}), this converges in total variation as
\begin{align}
    \norm{\nu_t - \nu}_{\rm TV} &\le \sqrt{2H(\nu_t|\nu)} \overset{t\to\infty}{\to} 0
\end{align}
and hence narrowly as in \eqref{eq:schr_question}.
The ${\rm TV}$ decay is furthermore exponential if the log-Sobolev inequality (Corollary \ref{cor:lsi_epr}) holds.

In other words, the act of conditioning on the large deviation in Schr\"odinger's original problem \eqref{eq:schr_question} can be achieved by (or endowed with the structure of) the stochastic dynamics of a particle whose law is the gradient flow of entropy in the nonlocal diffusion metric \eqref{eq:gradient_flow_entropy_2}.
As noted earlier, this structure is highly reminiscent of the nonlocal Wasserstein \cite{erbar_gradient_2014,peletier_jump_2022} and other generalized gradient flows \cite{peletier_variational_2014} appearing in out-of-equilibrium thermodynamics.
It suggests a physical interpretation: taking physical time to be algorithmic time $t$ along the Sinkhorn flow \eqref{eq:sinkhorn_dual}, \eqref{eq:sinkhorn_primal} rather than $\tau$ along the bridge \eqref{eq:dynamic_sb}, a system which relaxes to equilibrium by probability currents induced by weighted \eqref{eq:kernel} nonlocal differences in a chemical potential solves the entropic optimal transport problem \eqref{eq:eot_intro}.
The inverse metric tensor \eqref{eq:gradient_flow_entropy_2}, or \textit{mobility relation} of this dissipative system, is similar to classical models of synchronization via all-all coupling such as the continuum Kuramoto model \cite{kuramoto_chemical_1984}.
This, supplemented by the many geometric features the flow shares with true diffusion processes as shown in \S \ref{sec:diffusion}, raises the intriguing possibility of natural systems which inherently solve entropic optimal transport or Schr\"odinger bridge problems. 
An immediate application of relevance would be a physical system which, by realizing this gradient flow, solves the Schr\"odinger bridge problem between given noise and data distributions, in principle training a diffusion generative model \cite{de_bortoli_diffusion_2021,albergo_stochastic_2023}.

Outside of computational problems, examples of transport optimization abound in biological systems, from the famous maze-solving abilities of the slime mold \textit{Physarum polycephalum} \cite{nakagaki_maze-solving_2000} to the recent Waddington optimal transport view of distributional shifts in gene expression space \cite{schiebinger_optimal-transport_2019}.
The latter demonstrates that intermediate expression profiles between snapshots of ancestor and descendent cell populations can in some cases be successfully predicted by (entropic) OT interpolations.
This presents an intriguing biological cousin to Schr\"odinger's problem: if cell states (captured by their \textit{expression profiles}, i.e. per-gene mRNA copy numbers at a given time) are thought of as diffusion-limited particles in gene expression space which present ``surprising'' snapshots as in \eqref{eq:lln_deviation}\textemdash corresponding e.g. to the development of new cell types\textemdash what is the most likely path in cell-state space taken between them?
Furthermore, the logic of this trajectory inference point of view \cite{lavenant_toward_2024} can be reversed to interpret the developmental process itself as a stochastic optimal control problem \cite{chen_stochastic_2021} in the expression space.
The optimal control (expression drift) can then be obtained by physically realizing \eqref{eq:gradient_flow_entropy_2}, with algorithmic time $t$ representing, among many possibilities, developmental or evolutionary parameters which vary slowly relative to the expression (bridge) time $\tau$.

The microscopic details \cite{liero_microscopic_2017} of a physical system which implements the entropic gradient flow \eqref{eq:gradient_flow_entropy_2} remain open; however, many physical models are now known to have gradient structures with free energy and mobility combinations generalizing the classical Wasserstein case, including notably the case of certain reactive flows \cite{mielke_gradient_2011}.
From the analytical side, further work remains to (i) quantitatively relate the Poincar\'e constant \eqref{eq:sinkhorn_epr_poincare} and the entropic regularization parameter $\varepsilon$ and imposed marginals, (ii) establish possible conditions under which uniform-in-time bounds for it hold along the flow, and (iii) interpret the curvature-like condition \eqref{eq:dynamic_curvature} in terms of the imposed marginals $e^{-U}, e^{-V}$.
At the moment, the thermodynamic interpretation of the Sinkhorn flow we offer here complements existing analysis with (i) a toolbox of functional inequalities mirroring those from the geometry of diffusion, (ii) algorithmic stopping and latent space design heuristics, and (iii) a physical entropic gradient flow interpretation in which certain systems driven to equilibrium by nonlocal diffusions may be implicitly solving entropic optimal transport or Schr\"odinger bridge problems.

\bibliography{references}
\bibliographystyle{unsrtnat}

\end{document}